\documentclass[english,a4paper,11pt]{article}
\PassOptionsToPackage{pdfpagelabels=false}{hyperref}

\usepackage{amsmath, amsxtra, amsfonts, amssymb, amstext}
\usepackage{amsthm}
\usepackage{booktabs}
\usepackage{fullpage}
\usepackage{nicefrac}
\usepackage{xspace}
\usepackage[noadjust]{cite}
\usepackage{url}\urlstyle{rm}
\usepackage{graphics}
\usepackage[usenames,dvipsnames]{xcolor}
\usepackage[colorlinks]{hyperref}
\definecolor{linkblue}{rgb}{0.1,0.1,0.8}
\hypersetup{colorlinks=true,linkcolor=linkblue,filecolor=linkblue,urlcolor=linkblue,citecolor=linkblue}
\usepackage[algo2e,ruled,vlined,linesnumbered]{algorithm2e}
\usepackage{wrapfig}

\newtheorem{theorem}{Theorem}
\newtheorem{lemma}[theorem]{Lemma}


\newcommand{\N}{\mathbb{N}}
\newcommand{\R}{\mathbb{R}}
\newcommand{\Z}{\mathbb{Z}}

\renewcommand{\epsilon}{\varepsilon}
\newcommand{\eps}{\epsilon}

\DeclareMathOperator{\eq}{eq}
\DeclareMathOperator{\mut}{mut}
\DeclareMathOperator{\cross}{cross}

\newcommand{\Bin}{\mathcal{B}}

\newcommand{\assign}{\leftarrow}

\newcommand{\oea}{$(1 + 1)$~EA\xspace}

\newcommand{\ga}{$(1 + (\lambda,\lambda))$~GA\xspace}

\newcommand{\onemax}{\textsc{OneMax}\xspace}
\newcommand{\OM}{\textsc{Om}\xspace}

\begin{document}
        
\title{A Tight Runtime Analysis of the $(1+(\lambda, \lambda))$ Genetic Algorithm on OneMax}

\author{
Benjamin Doerr, \'Ecole Polytechnique, Univ. Paris-Saclay, France\\
Carola Doerr, CNRS and Sorbonne Universit\'es, UPMC Univ Paris 06, Paris, France
}
\date{This is a preliminary version of a paper that is to appear at Genetic and Evolutionary Computation Conference (GECCO 2015).}
\maketitle
 
\begin{abstract}
Understanding how crossover works is still one of the big challenges in evolutionary computation research, and making our understanding precise and proven by mathematical means might be an even bigger one. As one of few examples where crossover provably is useful, the $(1+(\lambda, \lambda))$ Genetic Algorithm (GA) was proposed recently in [Doerr, Doerr, Ebel. Lessons From the Black-Box: Fast Crossover-Based Genetic Algorithms. TCS 2015]. Using the fitness level method, the expected optimization time on general \textsc{OneMax} functions was analyzed and a $O(\max\{n \log(n) / \lambda, \lambda n\})$ bound was proven for any offspring population size $\lambda \in [1..n]$.

We improve this work in several ways, leading to sharper bounds and a better understanding of how the use of crossover speeds up the runtime in this algorithm. We first improve the upper bound on the runtime to $O(\max\{n \log(n) / \lambda, n \lambda \log\log(\lambda)/\log(\lambda)\})$. This improvement is made possible from observing that in the parallel generation of $\lambda$ offspring via crossover (but not mutation), the best of these often is better than the expected value, and hence several fitness levels can be gained in one iteration.

We then present the first lower bound for this problem. It matches our upper bound for all values of $\lambda$. This allows to determine the asymptotically optimal value for the population size. It is  $\lambda = \Theta(\sqrt{\log(n) \log\log(n)/\log\log\log(n)})$, which gives an optimization time of $\Theta(n \sqrt{\log(n) \log\log\log(n) / \log\log(n)})$. Hence the improved runtime analysis both gives a runtime guarantee improved by a super-constant factor and yields a better actual runtime (faster by more than a constant factor) by suggesting a better value for the parameter $\lambda$.

We finally give a tail bound for the upper tail of the runtime distribution, which shows that the actual runtime exceeds our runtime guarantee by a factor of $(1+\delta)$ with probability $O((n/\lambda^2)^{-\delta})$ only.
\end{abstract}

\sloppy{
\section{Introduction}

The role of crossover in evolutionary computation is still not very well understood. On the one hand, it is used intensively in practice, on the other hand, few rigorous theoretical or experimental investigations support clearly the usefulness of crossover. The difficulties both in experimentally supporting early explanation models like the building block hypothesis~\cite{MitchellHF93} or in  constructing artificial example functions (e.g.~\cite{JansenW02}) such that simple evolutionary algorithms perform better with crossover than without (note that this would still only be a very weak support for the concept of crossover) rather suggest that crossover is not that easily employed with success. In the meantime, a few less artificial examples were found where crossover provably leads to a runtime having a smaller asymptotic order of magnitude, namely a simplified variant of the Ising model on rings~\cite{FischerW04}) and on trees~\cite{Sudholt05} as well as a series of works on the all-pairs shortest path problem (\cite{DoerrHK12}, \cite{DoerrT09}, and \cite{DoerrJKNT13tcs}). 

In this work, we build on the latest algorithm where crossover was proven to be useful, the \ga proposed in \cite{DoerrDE15}. Unlike most previous works, here crossover has a super-constant speed-up even for very simple functions like the \onemax test function. However, experiments show that the algorithm performs well also on linear functions, royal road functions, and maximum satisfiability instances~\cite{GoldmanP14} (in fact, the latter work shows that the \ga outperforms hill climbers for several problems, for MaxSat it also outperforms the Linkage Tree Genetic Algorithm). The \ga uses crossover in a different way than the previous works. Instead of trying to combine particularly fit solution parts, here a biased uniform crossover is used as a repair mechanism. Having such a crossover-based repair mechanism, we can use mutation with a higher rate, leading to a faster exploration of the search space. 

Given this different working-principle not seen before in discrete evolutionary optimization, there is a strong motivation to gain a deeper understanding of the \ga and its working principles. While in the first analysis of the \ga on the \onemax test function $\OM:\{0,1\}^n \rightarrow \{0,1,\ldots,n\}; x \mapsto \sum_{i=1}^n{x_i}$ only an upper bound of $O(\max\{n \log(n)/\lambda, \lambda n\})$ for the expected number of fitness evaluations was shown, we now determine the precise expected runtime to be $\Theta(\max\{n \log(n) / \lambda, n \lambda \log\log(\lambda)/\log(\lambda)\})$ for all values of $\lambda \in [1..n]$. We thus both improve the upper bound and give the first lower bound, which in addition matches the upper bound. 

We further prove a strong concentration result for the runtime, showing that deviations above the expectation are unlikely: For all $\delta > 0$, the probability that the actual runtime exceeds a bound as above by a factor of $(1+\delta)$ is at most $O((n/\lambda^2)^{-\delta})$. 

Our result on the expected runtime has two implications beyond showing that the \ga is slightly (but still by a super-constant factor) faster than what the previous work guaranteed. (i) The previous upper bound gave the best runtime guarantee for $\lambda = \Theta(\sqrt{\log n})$, namely $O(n \sqrt{\log n})$ with the old bound (and $\Theta(n \sqrt{\log n})$ with our new bound). From our new and sharp runtime estimate, however, we derive a better value for $\lambda$ (together with the guarantee that it is asymptotically optimal), namely $\lambda = \Theta(\sqrt{\log(n) \log\log(n)/\log\log\log(n)})$. This gives an optimization time of 
$\Theta(n \sqrt{\log(n) \log\log\log(n) / \log\log(n)})$. Hence we also improve the performance by determining a better value for the parameter $\lambda$. (ii) Our analysis leading to these results also gives the desired additional insights in the working principles of this crossover operator and its interplay with mutation. The improved runtime guarantee is based on the observation that when generating $\lambda$ offspring in parallel, some have a fitness significantly better than expected. We exploit this to show that sufficiently often we gain sufficiently many fitness levels in one generation. Interestingly, the good runtimes shown for the \ga only stem from better-than-expected individuals in the crossover phase, but not in the mutation phase.

With the few runtime results on crossover-based EAs and, also, still the majority of the runtime results for EA in general being for evolutionary algorithms having trivial population sizes, we feel that our work also advances the state of the art in terms of analysis methods. Our argument that one out of $\lambda$ offspring can have a significantly better fitness than the expected fitness of one offspring resembles a similar one made by Jansen, De Jong, and Wegener~\cite{JansenJW05}, who used multiple fitness level gains to prove that for the $(1+\lambda)$ EA optimizing \onemax a linear speed-up (compared to $\lambda=1$) exists if and only if $\lambda = O(\log(n) \log\log(n) / \log\log\log(n))$. Note that this result is different from ours in two respects, namely in that it does not show a positive influence of $\lambda$ on the optimization time (number of fitness evaluation), but only on the number of generations, and in that there the better-than-expected offspring is generated by mutation, whereas by crossover in our setting. A second difference is that the random experiment producing the new generation for the \ga has stochastic dependencies that are not present in the $(1+\lambda)$ EA, which require different arguments (e.g., a balls-into-bins argument). 

\section{The \texorpdfstring{\ga}{(1+(lambda,lambda))~GA}}
\label{sec:algorithm}

The \ga is a simple evolutionary algorithm using crossover. Following~\cite{DoerrDE13,DoerrDE15} we present it here for the maximization of pseudo-Boolean functions $f: \{0,1\}^n \rightarrow \R$. Its pseudo-code is given in Algorithm~\ref{alg:GA}. 

\begin{algorithm2e}[t]%
	\textbf{Initialization:} 
	Choose $x \in \{0,1\}^n$ uniformly at random and evaluate $f(x)$\;
 \textbf{Optimization:}
\For{$t=1,2,3,\ldots$}{
\textbf{Mutation phase:}
Sample $\ell$ from $\Bin(n,p)$\label{line:L}\;
\For{$i=1, \ldots, \lambda$\label{line:mutstart}}{
$x^{(i)} \assign \mut_{\ell}(x)$ and evaluate $f(x^{(i)})$\;
}
Choose $x' \in \{x^{(1)}, \ldots, x^{(\lambda)}\}$ with $f(x')=\max\{f(x^{(1)}), \ldots, f(x^{(\lambda)})\}$ u.a.r.\label{line:mutend}\;
\textbf{Crossover phase:}
\For{$i=1, \ldots, \lambda$\label{line:costart}}{
$y^{(i)} \assign \cross_{c}(x,x')$ and evaluate $f(y^{(i)})$\label{line:co}\; 
}
Choose $y \in \{y^{(1)}, \ldots, y^{(\lambda)}\}$ with $f(y)=\max\{f(y^{(1)}), \ldots, f(y^{(\lambda)})\}$ u.a.r.\label{line:coend}\;
\textbf{Selection step:}
\lIf{$f(y)\geq f(x)$}{$x \assign y$\;
}
}
\caption{The \ga from~\cite{DoerrDE13} with offspring population size $\lambda$, mutation probability $p$, and crossover bias $c$. The standard choices for the latter two parameters are $p = \lambda / n$ and $c = 1 / \lambda$. The mutation operator $\mut_\ell$ generates an offspring from one parent by flipping exactly $\ell$ random bits (without replacement). The crossover operator $\cross_c$ performs a biased uniform crossover, taking bits independently with probability $c$ from the second argument and with probability $1-c$ from the first parent.}
\label{alg:GA}
\end{algorithm2e}

The \ga is initialized with a solution candidate drawn uniformly at random from $\{0,1\}^n$. It then proceeds in iterations (rounds) consisting of a mutation, a crossover, and a selection phase. In an important contrast to many other genetic algorithms, the mutation phase \emph{precedes} the crossover phase. This allows to use crossover as a repair mechanism, as we shall discuss in more detail below.

In the \emph{mutation phase} of the \ga, we create $\lambda$ offspring from the current-best solution $x$ by applying to it the mutation operator $\mut_{\ell}(\cdot)$, which samples $\ell$ (different) positions uniformly at random and generates a new bit string from the input by flipping the bits in these $\ell$ positions. That is, $\mut_\ell(x)$ is a bit string in which for $\ell$ random positions $i$ the entry $x_{i} \in \{0,1\}$ is replaced by $1-x_i$. 
The \emph{step size} $\ell$ is chosen randomly according to a binomial distribution $\Bin(n,p)$ with $n$ trials and success probability $p$; in our analyses we follow the suggestion in~\cite{DoerrDE15} and use $p=\lambda/n$. The expected distance of a random offspring to $x$ is thus $\lambda$. To ensure that all mutants have the same distance from the father $x$, the same $\ell$ is used for all $\lambda$ offspring. 
The fitness of the $\lambda$ offspring is computed and the best one of them,  $x'$, is selected to take part in the crossover phase. If there are several offspring having maximal fitness, we pick one of them uniformly at random (u.a.r.).

When $x$ is already close to an optimal solution, the offspring created in the mutation phase are typically all of much worse fitness than $x$. Our hope is though that they have discovered some parts of the optimum solution that is not yet reflected in $x$. In order to preserve these parts while at the same time not destroying the good parts of $x$, the \ga creates in the \emph{crossover phase} $\lambda$ offspring from $x$ and $x'$. Each one of these offspring is sampled from a uniform crossover with bias $c$ to take an entry from $x'$. That is, each of the offspring $\cross_c(x,x')$ is created by taking independently for each position $i$ the entry $x'_i$ with probability $c$ and taking the entry from $x_i$ otherwise. 
Again we follow the suggestion from~\cite{DoerrDE15} and use $c=1/\lambda$. With this choice, we try to ensure that the new individual $y$ is close to $x$ even then the mutation rate is high. Note that $\cross_c(x,\mut_\lambda(x))$ is just the outcome of applying to $x$ standard-bit mutation with mutation probability $1/n$. Of course, due to the intermediate selection of $x'$, our new individual $y$ follows a more complicated distribution (which is the reason for the performance of our algorithm). Again we evaluate the fitness of the $\lambda$ new offspring and select the best one of them, which we denote by $y$. If there are several offspring of maximal fitness, we simply take one of them uniformly at random.\footnote{In~\cite[Section 4.4]{DoerrDE13} and~\cite{DoerrDE15} a different selection rule is suggested for the crossover phase, which is more suitable for functions with large plateaus of the same fitness value. 
Since we consider in this work only the \onemax function, for which both algorithms are identical by symmetry reasons, we refrain from stating in Algorithm~\ref{alg:GA} the slightly more complicated version proposed there, which selects the parent solution $x$ only if there is no offspring $\neq x$ of fitness value at least as good as the one of $x$.}

Finally, in the \emph{selection step} the current-best solution $x$ is replaced by its offspring $y$ if and only if the fitness of $y$ is at least as good as the one of $x$.

As common in the runtime analysis community, we do not specify a termination criterion. The simple reason is that we study as a theoretical performance measure the expected number of function evaluations that the \ga performs until it evaluates for the first time a search point of maximal fitness (the so-called optimization time). Of course, for an application to a real problem a termination criterion has to be specified.

\section{Runtime Analysis}

\emph{Runtime analysis} is one of the most successful theoretical tools to understand the performance of evolutionary algorithms. The \emph{runtime} or \emph{optimization time} of an algorithm (e.g., our \ga) on a problem instance (e.g., the \onemax function) is the number of fitness evaluations that are performed until for the first time an optimal solution is evaluated. 

If the algorithm is randomized (like our \ga), this is a random variable $T$, and we usually make statements on the expected value $E[T]$ or give bounds that hold with some high probability, e.g., $1 - 1/n$. When regarding a problem with more than one instance (e.g., traveling salesman instance on $n$ cities), we take a worst-case view. This is, we regard the maximum expected runtime over all instances, or we make statements like that the runtime satisfies a certain bound for all instances. 

In this work, the optimization problem we regard is the classic \onemax test problem consisting of the single instance $\OM: \{0,1\}^n \to \{0,1,\ldots,n\}; x \mapsto \sum_{i = 1}^n x_i$, that is, maximizing the number of ones in a bit-string. 
Despite the simplicity of the \onemax problem, analyzing randomized search heuristics on this function has spurred much of the progress in the theory of evolutionary computation in the last 20 years, as is documented, e.g., in the recent textbook~\cite{Jansen13}. 

Of course, when regarding the performance on a single test instance, then we should ensure that the algorithm does not exploit the fact that there is only one instance. A counter-example would be the algorithm that simply evaluates and outputs $x^* = (1,...,1)$, giving a perfect runtime of $1$. One way of ensuring this is that we restrict ourselves to unbiased algorithms (see~\cite{LehreW12}) which treat bit-positions and bit-values in a symmetric fashion. Consequently, an unbiased algorithm for the \onemax problem has the same performance on all problems with isomorphic fitness landscape, in particular, on all (generalized) \onemax functions $\OM_z : \{0,1\}^n \to \{0,1,\ldots,n\}; x \mapsto \eq(x,z)$ for $z \in \{0,1\}^n$, where $\eq(x,z)$ denotes the number of bit-positions in which $x$ and $z$ agree. It is easy to see that the \ga is unbiased (for all parameter settings). We will henceforth not stress this fact anymore. Any other algorithms we will discuss are also all unbiased.

The result we build on is this work is the runtime analysis of the \ga for various parameter settings on the \onemax problem in~\cite{DoerrDE15}. This analysis suggested, in particular, certain settings for the parameters to which we will restrict ourselves in the following. For these settings, the following upper bound for the expected runtime was proven (Theorem 4 in \cite{DoerrDE15} for $\lambda=k$, note that the case $\lambda=1$ excluded there is trivial for $\lambda = k$ since in this case, the \ga imitates the \oea). 

\begin{theorem}[\cite{DoerrDE15}]
\label{thm:old}
Let $\lambda \in [1..n]$, possibly depending on $n$. The expected optimization time of the \ga with mutation probability $p=\lambda/n$ and crossover bias $c=1/\lambda$ on \onemax is \[O\bigg(\max\bigg\{\frac{n \log n}{\lambda}, \lambda n \bigg\}\bigg).\]

In particular, for $\lambda = \Theta(\sqrt{\log n})$, the expected optimization time is of order at most $n \sqrt{\log n}$.
\end{theorem}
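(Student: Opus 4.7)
My plan is to apply the classical fitness level method. Let $p_d$ denote the probability that one full generation of the \ga, started from a search point $x$ with exactly $d = n - f(x)$ zero-bits, produces an accepted offspring of strictly larger fitness. Because each generation performs exactly $2\lambda$ fitness evaluations, the expected optimization time is bounded by $2\lambda \sum_{d=1}^{n} 1/p_d$. The theorem will follow once I establish the lower bound $p_d = \Omega(\min\{1, \lambda^2 d/n\})$, since then
\[
\sum_{d=1}^{n} \frac{1}{p_d} \;=\; O\bigg(\sum_{d=1}^{\lceil n/\lambda^2 \rceil} \frac{n}{\lambda^2 d} + \sum_{d=\lceil n/\lambda^2 \rceil}^{n} 1\bigg) \;=\; O\bigg(\frac{n\log n}{\lambda^2} + n\bigg),
\]
and multiplying by $2\lambda$ yields $O(\max\{n\log(n)/\lambda,\, n\lambda\})$.

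To lower bound $p_d$, I would first condition on the event $\mathcal{E} = \{\ell \in [\lambda/2,\, 3\lambda/2]\}$. Since $\ell \sim \Bin(n,\lambda/n)$ has mean $\lambda$ and variance at most $\lambda$, Chebyshev yields $\Pr[\mathcal{E}] = \Omega(1)$ whenever $\lambda$ is large, and the few small values of $\lambda$ are handled by a direct computation of $\Pr[\ell=\lambda]$. On $\mathcal{E}$, a single mutant flips $\ell = \Theta(\lambda)$ uniformly random positions and therefore hits at least one of the $d$ zero-bits with probability $1 - \binom{n-d}{\ell}/\binom{n}{\ell} \geq 1 - (1-d/n)^{\ell} = \Omega(\min\{1,\, \lambda d/n\})$. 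Conditional on $\ell$, the $\lambda$ mutants are independent, so the probability that at least one of them, and therefore also the winner $x'$ (which by construction maximizes the number of $0\to 1$ flips, since fitness is monotone in this count for fixed $\ell$), carries at least one ``good flip'' is $\Omega(\min\{1,\, \lambda^2 d/n\})$.

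Next, conditional on $x'$ carrying $g \geq 1$ good flips and $b = \ell - g$ bad flips (bits that went from $1$ to $0$), I would show that the crossover phase outputs an improvement with constant probability. A single crossover child inherits each of the $\ell$ differing positions from $x'$ independently with probability $1/\lambda$, so it keeps at least one good flip and none of the bad flips with probability at least $(1-(1-1/\lambda)^{g})(1-1/\lambda)^{b} \geq (1/\lambda)(1-1/\lambda)^{3\lambda/2} = \Omega(1/\lambda)$, using $b \leq \ell \leq 3\lambda/2$ and $g \geq 1$. Amplifying over the $\lambda$ independent crossover offspring then gives success probability $1-(1-\Omega(1/\lambda))^{\lambda} = \Omega(1)$, and any such child strictly improves on $x$ and is accepted in the selection step. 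Combining the three factors yields the desired $p_d = \Omega(\min\{1,\, \lambda^2 d/n\})$.

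The main obstacle will be the careful handling of the two nested ``best-of-$\lambda$'' steps. In the mutation phase one must not demand that $x'$ itself be better than $x$: it is perfectly acceptable for $x'$ to carry many more bad flips than good ones, as long as it carries at least one good flip, because the biased crossover is then responsible for filtering the bad flips out. In the crossover phase, the quantitative heart of the argument is the estimate $(1-1/\lambda)^{O(\lambda)} = \Omega(1)$, which guarantees that among the $\lambda$ independent children one performs a clean repair. Both the hypergeometric-tail estimate in the mutation step and the crossover estimate must be verified with constants that are uniform in $\lambda$, so that the fitness-level sum above goes through across the whole parameter range $\lambda \in [1..n]$; everything else is then harmonic-sum bookkeeping.
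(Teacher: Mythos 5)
Your proposal is correct and follows essentially the same route as the cited proof: the classic fitness level method with the per-iteration success probability $p_d = \Omega(\min\{1,\lambda^2 d/n\})$, which is exactly the content of Lemma~\ref{lem:improve} (Lemma~7 of~\cite{DoerrDE13}), followed by the harmonic-sum computation and the factor $2\lambda$ for evaluations per iteration. The only point to tidy up is the degenerate case $\lambda=1$, where $(1-1/\lambda)^{3\lambda/2}=0$ makes your crossover estimate vacuous; but there $c=1$ forces $b=0$ whenever $g\ge 1$ and $\ell=1$ (and the algorithm essentially imitates the \oea), so this is handled by the direct computation you already foresee for small $\lambda$.
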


The main result of this work is the following improvement and strengthening of the previous result.

\begin{theorem}[our main result]
\label{thm:ourmain}
  Let $\lambda \in [1..n]$.   The expected optimization time of the \ga with mutation probability
  $p = \lambda / n$ and 
  crossover bias $c = 1 / \lambda$ on the \onemax test function is 
  \begin{align*}
  \Theta\left(\max\left\{\frac{n \log(n)}{\lambda}, \frac{n \lambda \log\log(\lambda)}{\log(\lambda)}\right\}\right).
  \end{align*}
For all $\delta > 0$, the probability that the actual runtime exceeds a bound of this magnitude by a factor of more than $(1+\delta)$ is at most $O((n/\lambda^2)^{-\delta})$.
 
The expected runtime is minimized by the parameter choice 
$\lambda = \Theta(\sqrt{\log(n) \log\log(n) / \log\log\log(n)})$. This yields an expected optimization time of 
\begin{align*}
\Theta(n \sqrt{\log(n) \log\log\log(n) / \log\log(n)}).
\end{align*}
\end{theorem}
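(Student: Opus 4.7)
The plan is to prove the upper and lower bounds separately, with the concentration bound riding on top of the upper-bound argument; the optimal choice of $\lambda$ then follows by a direct calculus exercise on the $\max$ expression. Throughout, I denote by $d$ the current Hamming distance of $x$ to the optimum and treat one iteration as costing $2\lambda$ fitness evaluations.

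For the \textbf{upper bound} I would use a refined fitness-level method that credits an iteration with \emph{several} levels at once. Conditioning on the mutant $x'$ having $k_g$ correctly flipped (good) bits and $k_b$ incorrectly flipped (bad) bits (with $k_g+k_b=\ell$), a single crossover child's fitness gain over $x$ is distributed as $\Bin(k_g,1/\lambda)-\Bin(k_b,1/\lambda)$ with mean $(k_g-k_b)/\lambda$. The crucial point is that the \emph{maximum} over $\lambda$ independent copies of this random variable is typically much larger than its mean: when $k_g=\Omega(\ell)$ and $\ell=\Theta(\lambda)$, standard extremal estimates on the binomial tail show that the best of $\lambda$ crossover offspring gains $\Omega(\log\lambda/\log\log\lambda)$ fitness with constant probability. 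This is exactly the $\log\log\lambda/\log\lambda$ saving over the naive $\lambda n$ bound of Theorem~\ref{thm:old}. Combined with the observation that the mutation phase itself produces $k_g-k_b=\Omega(\ell d/n)$ with constant probability whenever $d\ge n/\lambda^2$ (again because $x'$ is the best of $\lambda$ mutants and the max beats the mean), I sum the expected waiting times over the ``bulk'' regime $d\in[n/\lambda^2,n]$, giving $\Theta(n\lambda\log\log\lambda/\log\lambda)$ evaluations. For the ``endgame'' regime $d<n/\lambda^2$, neither crossover nor mutation can realistically fix more than one bit per iteration, and a standard \oea-style fitness-level sum contributes $\Theta(n\log n/\lambda)$.

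For the \textbf{lower bound} the main input is a matching upper bound on the per-iteration progress. Since crossover uses bias $c=1/\lambda$ and $\ell=O(\lambda)$ with high probability, the number of correctly flipped bits that simultaneously survive mutation-selection and crossover is dominated by the maximum of $\lambda$ independent $\Bin(O(\lambda),1/\lambda)$ variables, which is $O(\log\lambda/\log\log\lambda)$ with overwhelming probability. Converting this pointwise estimate into a lower bound on the \emph{expected} number of iterations in the bulk regime --- despite the fact that the algorithm adaptively couples $\ell$, $x'$, and the crossover children --- will be the hardest step; I plan to handle it via an additive potential function whose one-step drift is uniformly bounded above by $O(\log\lambda/\log\log\lambda)$ no matter what random choices were made earlier in the iteration, then invoke an additive-drift lower bound. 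The second term $\Omega(n\log n/\lambda)$ is delivered by an endgame argument: when $d=O(n/\lambda^2)$, the probability of fixing any given missing bit in one iteration is $\Theta(\lambda^2 d/n)$ (heuristically, probability $\Theta(\lambda e^{-\lambda})$ of selecting $\ell=1$ in mutation, probability $d/n$ of hitting a good bit, and $\lambda$ parallel trials), so a coupon-collector argument across the last $\Theta(\log n)$ missing bits yields the claim.

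Finally, the \textbf{tail bound} piggybacks on the endgame analysis. Once $d$ is small, each iteration succeeds in fixing a specific missing bit with probability $\Theta(\lambda^2/n)$ essentially independently of the past, so the probability that $k=\delta(n/\lambda^2)\log(n/\lambda^2)$ extra iterations all fail is at most $(1-\Theta(\lambda^2/n))^k=(n/\lambda^2)^{-\Theta(\delta)}$; translating iterations to evaluations via the factor $2\lambda$ matches the $(1+\delta)$-factor overrun in the statement. Deriving the optimal $\lambda$ is then elementary: equating $n\log n/\lambda=n\lambda\log\log\lambda/\log\lambda$ gives $\lambda^2=\log n\cdot\log\lambda/\log\log\lambda$, and plugging in $\lambda=\Theta(\sqrt{\log n\log\log n/\log\log\log n})$ (so that $\log\lambda=\Theta(\log\log n)$ and $\log\log\lambda=\Theta(\log\log\log n)$) verifies the identity and yields the stated optimal runtime.
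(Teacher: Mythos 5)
Your overall architecture matches the paper's: multi-level gains from the best of $\lambda$ crossover offspring in the bulk, a fitness-level/coupon-collector endgame for the $n\log(n)/\lambda$ term and the tail bound, an ``every missing bit must be flipped'' argument for the first lower bound, and additive drift with a per-iteration progress bound of $O(\log\lambda/\log\log\lambda)$ for the second. Two steps would fail as stated, however. In the upper bound you claim that for all $d\ge n/\lambda^2$ the mutation winner has $k_g=\Omega(\ell d/n)$ good bits and the best crossover child then gains $\Omega(\log\lambda/\log\log\lambda)$ levels with constant probability. For $d$ near $n/\lambda^2$ the expected number of good bits in a mutant is $\ell d/n=\Theta(1/\lambda)=o(1)$, so typically there are no good bits at all and the multi-level argument has nothing to work with; the estimate $\binom{k_g}{\gamma}(1/\lambda)^\gamma(1-1/\lambda)^{\ell-\gamma}\ge 1/\lambda$ only yields $\gamma=\Omega(\log\lambda/\log\log\lambda)$ when $k_g$ is itself at least of that order, i.e.\ when $d\gtrsim n\log\log\lambda/\log\lambda$. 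The paper therefore inserts a middle phase: multi-level gains down to $d_0=n\ln\ln(\lambda)/\ln(\lambda)$ (Lemmas~\ref{lem:advmut} and~\ref{lem:advcross}), then single-level gains with constant success probability (Lemma~\ref{lem:improve}) from $d_0$ down to $n/(2\lambda^2)$, which still costs only $O(d_0)$ iterations. Your proposal needs this extra case. Relatedly, your appeal to ``max beats mean'' in the mutation phase is a misattribution: the paper uses only that a single mutant concentrates around its expected number $\ell d/n$ of good bits (hypergeometric Chernoff), and explicitly notes that best-of-$\lambda$ selection yields no asymptotic advantage in the mutation phase --- the super-constant gain comes from the crossover phase alone.

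On the lower bound, both halves gloss over the dependency structure. For the $\Omega(n\log(n)/\lambda)$ term, a coupon-collector argument over ``the last $\Theta(\log n)$ missing bits'' requires showing the process actually reaches that state and controlling which (history-dependent) bits remain; the paper instead argues over all $\Theta(n)$ initially missing bits that with probability $1-\exp(-n^{\Omega(1)})$ some bit is never touched by any mutation during the first $\lfloor n\log(n)/(8\lambda^2)\rfloor$ iterations, and this needs the negative correlation of the events ``bit $i$ never flipped'' (nontrivial, since $\mut_\ell$ flips a fixed number of bits without replacement and all $\lambda$ mutants share the same $\ell$). For the drift term your plan is sound and matches Lemma~\ref{lem:lower2}: condition on $\ell\le 2\lambda$, bound each $|Y_j|$ by $2e\ln(\lambda)/\ln\ln(\lambda)$ except with probability $\lambda^{-2e+o(1)}$, union bound over $j$, and cap the gain by $\ell$ (resp.\ $E[\ell\mid\ell>2\lambda]$) in the exceptional cases before invoking Theorem~\ref{thm:drift}; do verify that the exceptional contributions $\lambda^{-2e+1+o(1)}\cdot 2\lambda$ and $\exp(-\Theta(\lambda))\cdot O(\lambda)$ are $O(\log\lambda/\log\log\lambda)$. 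Finally, your tail-bound computation controls the failure probability of a single fitness level only; to get $O((n/\lambda^2)^{-\delta})$ for the whole endgame you need the union bound over all $d_1$ remaining levels as in Lemma~\ref{lem:harmonic}, where the factor $d_1$ is absorbed by taking $(1+\delta)$ times the expected total time.
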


\section{Notation and Technical Tools}

In this section, besides fixing some very elementary notation, we collect the main technical tools we shall use. Mostly, these are large deviations bounds of various types. For the convenience of the reader, we first state the known ones. We then prove a tail bound for sums of geometric random variables with expectations bounded from above by the reciprocals of the first positive integers. We finally state the well-known additive drift theorem.

\subsection{Notation}

As the reader has experienced already, we write $[a..b]$ to denote the set $\{z \in \Z \mid a \le z \le b\}$ of integers between $a$ and $b$. We also use the short-hand $[a]:= [1..a]$. We write $\log(n)$ to denote the binary logarithm of $n$ and $\ln(n)$ to denote the natural logarithm of $n$. However, to avoid unnecessary case distinctions, we define $\log(n):=1$ for all $n \le 2$ and $\ln(n) := 1$ for all $n \le e$.

\subsection{Known Chernoff Bounds}

The following large deviation bounds are well-known and can be found, e.g., in~\cite{Doerr11bookchapter}. We call all these bounds Chernoff bounds (CB) despite the fact that it is now known that some have been found earlier by other researchers.

\begin{theorem}[classic Chernoff bounds]\label{thm:chernoff} 
  Let $X_1, \ldots, X_n$ be independent random variables taking values in $[0,1]$. Let $X = \sum_{i = 1}^n X_i$. 
  \begin{enumerate}
    \item \label{multchernoffupperstrong} Let $\delta \ge 0$. Then \\$\Pr[X \ge (1+\delta) E[X]] \le (\frac{e^\delta}{(1+\delta)^{1+\delta}})^{E[X]}$. 
    \item \label{multchernoffupper} Let $\delta \in [0,1]$. Then \\$\Pr[X \ge (1+\delta) E[X]] \le \exp(-\delta^2 E[X]/3)$. 
    \item \label{multchernofflower} Let $\delta \in [0,1]$. Then \\$\Pr[X \le (1-\delta) E[X]] \le \exp(-\delta^2 E[X]/2)$.
  \end{enumerate}
\end{theorem}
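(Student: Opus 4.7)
The plan is to prove all three bounds via the standard Cramér–Chernoff moment generating function (MGF) method. The starting point is Markov's inequality applied to $e^{tX}$: for any $t > 0$,
\[
\Pr[X \ge a] \;=\; \Pr[e^{tX} \ge e^{ta}] \;\le\; e^{-ta}\, \E[e^{tX}].
\]
Independence factors the MGF as $\E[e^{tX}] = \prod_{i=1}^{n} \E[e^{tX_i}]$, and since $X_i \in [0,1]$, the chord bound for the convex function $y \mapsto e^{ty}$ on $[0,1]$ gives $e^{tX_i} \le (1-X_i) + X_i e^t = 1 + X_i(e^t-1)$. Taking expectations and using $1+u \le e^u$, I get $\E[e^{tX_i}] \le \exp(p_i(e^t-1))$ where $p_i := \E[X_i]$, and multiplying over $i$ yields the master bound $\E[e^{tX}] \le \exp(\mu(e^t-1))$ with $\mu := \E[X]$.

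For part~(1), I would take $a = (1+\delta)\mu$ and minimize $\exp(\mu(e^t - 1) - t(1+\delta)\mu)$ over $t > 0$. Setting the derivative to zero gives $e^t = 1+\delta$, and back-substitution produces $\bigl(e^\delta / (1+\delta)^{1+\delta}\bigr)^{\mu}$. Part~(2) would then follow by showing the elementary inequality $\delta - (1+\delta)\ln(1+\delta) \le -\delta^2 / 3$ on $\delta \in [0,1]$ and exponentiating. For part~(3), I would repeat the argument with $t < 0$ and $a = (1-\delta)\mu$ (or, equivalently, apply the upper-tail version to the variables $1 - X_i$), obtaining $\bigl(e^{-\delta}/(1-\delta)^{1-\delta}\bigr)^{\mu}$; the inequality $-\delta - (1-\delta)\ln(1-\delta) \le -\delta^2 / 2$ on $\delta \in [0,1]$ then delivers the stated form.

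The only real obstacle is verifying the two analytic inequalities. The lower-tail one is clean: letting $g(\delta) := -\delta - (1-\delta)\ln(1-\delta) + \delta^2/2$, one finds $g(0) = 0$, $g'(0) = 0$, and $g''(\delta) = -\delta/(1-\delta) \le 0$ on $[0,1)$, so $g' \le 0$ and hence $g \le 0$ throughout. The upper-tail version is trickier because the auxiliary function $f(\delta) := \delta - (1+\delta)\ln(1+\delta) + \delta^2/3$ has $f'(\delta) = -\ln(1+\delta) + 2\delta/3$ which is \emph{not} monotone on $[0,1]$; after noting $f(0) = f'(0) = 0$ one must combine the concavity/convexity structure of $f'$ with the endpoint value $f'(1) = -\ln 2 + 2/3 < 0$ to conclude $f' \le 0$ on $[0,1]$ and therefore $f \le 0$. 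Both derivations are textbook calculus, but are easy to spoil if one naïvely assumes monotonicity of $f'$.
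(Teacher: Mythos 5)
Your derivation is correct and is precisely the standard Cram\'er--Chernoff argument (Markov applied to $e^{tX}$, the chord bound $\E[e^{tX_i}]\le\exp(p_i(e^t-1))$, optimization at $e^t=1+\delta$ resp.\ $e^{-t}=1-\delta$, then the two calculus inequalities, where you rightly flag that $f'(\delta)=-\ln(1+\delta)+2\delta/3$ is not monotone and must be handled via $f'(0)=0$, the sign change of $f''$ at $\delta=1/2$, and $f'(1)=2/3-\ln 2<0$). The paper gives no proof of this theorem at all --- it cites it as well known from [Doe11] --- and the argument in that reference is essentially the one you give, so there is nothing to add.
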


Binary random variables $X_1, \ldots, X_n$ are called \emph{negatively correlated}, if for all $I \subseteq [1..n]$ we have $\Pr[\forall i \in I : X_i = 0] \le \prod_{i \in I} \Pr[X_i = 0]$ and $\Pr[\forall i \in I : X_i = 1] \le \prod_{i \in I} \Pr[X_i = 1]$.

\begin{theorem}[CB, negative correlation]\label{tchernoffnegcor}
  Let $X_1, \ldots, X_n$ be negatively correlated binary random variables. Let $a_1, \ldots, a_n \in [0,1]$ and $X = \sum_{i = 1}^n a_i X_i$. Then $X$ satisfies the Chernoff bounds given in Theorem~\ref{thm:chernoff} \ref{multchernoffupper} and \ref{multchernofflower}.
\end{theorem}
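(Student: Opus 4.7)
My strategy is to reduce to the independent case by establishing the two moment-generating-function (MGF) inequalities
\begin{align*}
E[e^{sX}] \le \prod_{i=1}^n E[e^{sa_iX_i}] \text{ for all } s \ge 0, \qquad E[e^{-sX}] \le \prod_{i=1}^n E[e^{-sa_iX_i}] \text{ for all } s \ge 0,
\end{align*}
using only negative correlation and $a_i \in [0,1]$. Once these are in place, the textbook proofs of Theorem~\ref{thm:chernoff}~(\ref{multchernoffupper}) and~(\ref{multchernofflower}) carry over without change: one bounds each factor by $\exp\bigl((e^{\pm sa_i}-1)E[X_i]\bigr)$ via $1+x \le e^x$, uses convexity of $t\mapsto e^{\pm st}$ on $[0,1]$ to replace $(e^{\pm sa_i}-1)$ by $a_i(e^{\pm s}-1)$ and thus collapse $\sum_i a_i E[X_i]$ into $E[X]$, applies Markov's inequality to $e^{\pm sX}$, and optimizes $s$. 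So the genuinely new content lies entirely in the two MGF factorizations.

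For the upper-tail MGF, I would exploit that $X_i\in\{0,1\}$ to write the exact identity $e^{sa_iX_i} = 1 + (e^{sa_i}-1)X_i$. Expanding the product over $i$ as a sum over subsets $I \subseteq [1..n]$ gives
\begin{align*}
\prod_{i=1}^n e^{sa_iX_i} \;=\; \sum_{I \subseteq [1..n]} \Bigl(\prod_{i\in I}(e^{sa_i}-1)\Bigr)\prod_{i\in I} X_i.
\end{align*}
Since $\prod_{i\in I}X_i$ is the indicator of the event $\{\forall i\in I: X_i=1\}$, taking expectations, noting $e^{sa_i}-1 \ge 0$ for $s\ge 0$, and applying the ``all ones'' clause of negative correlation term by term yields
\begin{align*}
E[e^{sX}] \;\le\; \sum_{I\subseteq[1..n]} \Bigl(\prod_{i\in I}(e^{sa_i}-1)\Bigr)\prod_{i\in I}\Pr[X_i=1] \;=\; \prod_{i=1}^n\bigl(1+(e^{sa_i}-1)E[X_i]\bigr) \;=\; \prod_{i=1}^n E[e^{sa_iX_i}].
\end{align*}

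For the lower tail, the same expansion fails directly because $e^{-sa_i}-1 \le 0$ produces alternating signs, so the one-sided negative-correlation inequality cannot be applied term by term. I would circumvent this by passing to $Y_i:=1-X_i$: the definition of negative correlation is symmetric in $0$ and $1$ (the ``all zeros'' clause for $X$ is the ``all ones'' clause for $Y$, and vice versa), so $Y_1,\dots,Y_n$ are themselves negatively correlated binary variables. Setting $A:=\sum_i a_i$ and applying the upper-tail MGF bound already proven to $\sum_i a_i Y_i$, I obtain
\begin{align*}
E[e^{-sX}] \;=\; e^{-sA}\,E\bigl[e^{s\sum_i a_i Y_i}\bigr] \;\le\; e^{-sA}\prod_i E[e^{sa_iY_i}] \;=\; \prod_i e^{-sa_i}E[e^{sa_i(1-X_i)}] \;=\; \prod_i E[e^{-sa_iX_i}],
\end{align*}
completing the MGF bound.

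I do not anticipate a genuine obstacle here; the result is folklore. The one spot requiring a moment's thought is the sign issue in the lower tail, and the $1-X_i$ substitution together with the built-in $0$/$1$ symmetry of the definition of negative correlation disposes of it cleanly.
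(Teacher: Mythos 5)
Your proof is correct. Note, however, that the paper does not prove this statement at all: Theorem~\ref{tchernoffnegcor} is quoted as a known result from the cited survey literature, so there is no in-paper argument to compare against. Your argument is the standard one for extending Chernoff bounds to negatively correlated variables (essentially the Panconesi--Srinivasan moment-generating-function factorization): the identity $e^{sa_iX_i}=1+(e^{sa_i}-1)X_i$ for binary $X_i$, the subset expansion with nonnegative coefficients so that the ``all ones'' clause of negative correlation applies term by term, and the reduction of the lower tail to the upper tail via $Y_i=1-X_i$, which correctly exploits the $0$/$1$ symmetry of the definition. The remaining steps (bounding each factor by $\exp((e^{\pm sa_i}-1)E[X_i])$, using convexity of $t\mapsto e^{\pm st}$ on $[0,1]$ to absorb the weights $a_i$, then Markov plus optimization over $s$) are exactly the textbook derivation and go through verbatim because only the product-form bound on $E[e^{\pm sX}]$, not independence itself, is used there. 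I see no gap; this is a complete and self-contained proof of the statement the paper takes on faith.
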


Chernoff bounds also hold for \emph{hypergeometric} distributions. Let $A$ be any set of $n$ elements. Let $B$ be a subset of $A$ having $m$ elements. If $Y$ is a random subset of $A$ of $N$ elements (chosen uniformly at random from all $N$-element subsets of $A$, then $X :=|Y \cap B|$ has a hypergeometric distribution with parameters $(n,N,m)$. 

\begin{theorem}[CB, hypergeometric distributions]\label{thm:hypergeomchernoff}
  If $X$ has a hypergeometric distribution with parameters $(n,N,m)$, then $E[X]=Nm/n$ and $X$ satisfies the Chernoff bounds given in Theorem~\ref{thm:chernoff}.
\end{theorem}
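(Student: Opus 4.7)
The plan is to decompose $X$ as a sum of correlated indicator variables, compute the mean by symmetry, and then obtain the three Chernoff bounds by two complementary arguments: negative correlation for the weaker bounds (2) and (3), and a moment-generating-function domination for the strong bound (1).

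Write $X = \sum_{j \in B} X_j$ with $X_j = \mathbf{1}[j \in Y]$. Since $Y$ is uniformly distributed over the $N$-element subsets of $A$, each fixed $j \in A$ lies in $Y$ with probability $N/n$ by symmetry, so $\Exp[X] = m \cdot N/n = Nm/n$ by linearity. To see that the family $(X_j)_{j\in B}$ is negatively correlated in the sense used by Theorem~\ref{tchernoffnegcor}, I would compute, for any $I \subseteq B$ of size $k$,
\[
\Pr[\forall j\in I\colon X_j=1] = \frac{\binom{n-k}{N-k}}{\binom{n}{N}} = \prod_{i=0}^{k-1}\frac{N-i}{n-i}, \quad \Pr[\forall j\in I\colon X_j=0] = \frac{\binom{n-k}{N}}{\binom{n}{N}} = \prod_{i=0}^{k-1}\frac{n-N-i}{n-i},
\]
and note that since $N \le n$ we have $(N-i)/(n-i) \le N/n$ and $(n-N-i)/(n-i) \le (n-N)/n$ for each $0 \le i < k$. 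Multiplying over $i$ gives exactly the two inequalities in the definition of negative correlation, so Theorem~\ref{tchernoffnegcor} (with all $a_j=1$) yields bounds (2) and (3) of Theorem~\ref{thm:chernoff} immediately.

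For the strong upper-tail bound (1), negative correlation is insufficient. Here I would invoke the classical Hoeffding sampling-without-replacement inequality: if $X$ counts successes when $N$ items are drawn without replacement from $A$ and $\tilde X$ counts successes for $N$ items drawn with replacement, then $\Exp[\phi(X)] \le \Exp[\phi(\tilde X)]$ for every continuous convex $\phi$. Taking $\phi(t)=e^{st}$ with $s>0$ shows that the moment generating function of $X$ is dominated by that of a $\Bin(N,m/n)$ random variable, which has the same mean $Nm/n$. Since bound (1) in Theorem~\ref{thm:chernoff} is proved by applying Markov's inequality to $e^{sX}$, substituting the product form of the binomial MGF, and optimizing $s$, the same derivation now transfers verbatim to $X$.

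The one step that is not routine is this MGF-domination: it is the main obstacle because negative correlation alone does not yield the sharp form $(e^\delta/(1+\delta)^{1+\delta})^{\Exp[X]}$ of bound (1), and one genuinely needs the classical stochastic ordering between sampling with and without replacement to close the gap. Everything else — the symmetry argument for the mean, the product computation showing negative correlation, and the reduction of bounds (2) and (3) to Theorem~\ref{tchernoffnegcor} — is bookkeeping.
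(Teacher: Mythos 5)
Your derivation is correct, but there is nothing in the paper to compare it against: Theorem~\ref{thm:hypergeomchernoff} is quoted as a known fact from~\cite{Doerr11bookchapter} and is given no in-paper proof. Your self-contained argument is the standard one and every step checks out. The indicator decomposition and symmetry give $E[X]=Nm/n$; the product formulas $\Pr[\forall j\in I\colon X_j=1]=\prod_{i=0}^{k-1}(N-i)/(n-i)$ and $\Pr[\forall j\in I\colon X_j=0]=\prod_{i=0}^{k-1}(n-N-i)/(n-i)$ are right, and the termwise comparisons with $N/n$ and $(n-N)/n$ establish negative correlation exactly in the sense required by Theorem~\ref{tchernoffnegcor}, which delivers bounds \ref{multchernoffupper} and \ref{multchernofflower}; Hoeffding's convex-order domination of sampling without replacement by sampling with replacement then transfers the moment-generating-function computation behind bound \ref{multchernoffupperstrong} from $\Bin(N,m/n)$ (same mean $Nm/n$) to $X$. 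One small correction to your closing remark: negative correlation in the paper's sense is in fact already sufficient for the strong bound \ref{multchernoffupperstrong} in the unweighted binary case, since for $s>0$ one may expand $e^{sX}=\prod_{j\in B}\bigl(1+(e^s-1)X_j\bigr)=\sum_{I\subseteq B}(e^s-1)^{|I|}\prod_{j\in I}X_j$, bound each term via $E[\prod_{j\in I}X_j]=\Pr[\forall j\in I\colon X_j=1]\le (N/n)^{|I|}$, and obtain $E[e^{sX}]\le\bigl(1+(e^s-1)N/n\bigr)^m\le\exp\bigl((e^s-1)E[X]\bigr)$, after which the usual optimization over $s$ gives the $\bigl(e^\delta/(1+\delta)^{1+\delta}\bigr)^{E[X]}$ form. (The reason Theorem~\ref{tchernoffnegcor} as stated omits bound \ref{multchernoffupperstrong} is the presence of the weights $a_i$, for which this expansion trick does not apply directly.) This does not affect your proof: the Hoeffding route you actually use is classical, valid, and arguably cleaner.
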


\subsection{A Chernoff Bound for Geometric Random Variables}

To prove the concentration statement in Theorem~\ref{thm:ourmain}, we need a tail bound for the upper tail of a sum of a sequence of independent geometric random variables having expectations that are upper-bounded by a multiple of the harmonic series. While generally Chernoff bounds for geometric random variables are less understood than for bounded random variables, Witt~\cite{Witt14} proves such a bound. Witt's bound is sufficient for our purposes. For two reasons, we prove the following alternative result below. (i) Our proof is a simple reduction to the well-understood coupon collector process, and thus much simpler than Witt's. (ii) At the same time, our proof gives a stronger bound (for our setting, Witt's bound on the failure probability is roughly the fourth root of ours). Since scenarios as treated here are quite common in runtime analysis (for example, they appear whenever the fitness level method is employed in a situation where the probability of a progress is inversely proportional to the fitness distance from the optimum), we feel that presenting our result is justified here. 

We say that $X$ has a geometric distribution with success probability $p$ if for each positive integer $k$ we have $\Pr[X = k] = (1-p)^{k-1} p$. For all $n \in \N$, let $H_n := \sum_{i=1}^n (1/i)$ denote the $n$th Harmonic number.

\begin{lemma}\label{lem:harmonic}
  Let $X_1, \ldots, X_n$ be independent geometric random variables with success probabilities $p_i$. Assume that there is a number $C \le 1$ such that $p_i \ge C i / n$ for all $i \in [n]$. Let $X = \sum_{i=1}^n X_i$. Then $E[X] \le (1/C) n H_n \le (1/C) n (\ln(n)+1)$ and $\Pr[X \ge (1+\delta) (1/C) n \ln(n)] \le n^{-\delta}$ for all $\delta > 0$.
\end{lemma}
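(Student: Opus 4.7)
The plan is to reduce the sum to a classical coupon-collector process and apply a union bound. Since $p_i \geq Ci/n$, each $X_i$ is stochastically dominated by a geometric random variable $Z_i$ with success probability exactly $Ci/n$, and one may take the $Z_i$ independent. Writing $Z := \sum_{i=1}^n Z_i$, the expectation bound is immediate: $E[X] \leq E[Z] = \sum_{i=1}^n n/(Ci) = (1/C) n H_n \leq (1/C) n (\ln n + 1)$, using only $H_n \leq \ln n + 1$.

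Next, I would interpret $Z$ as the completion time of a slowed-down coupon-collector process in which, at each time step independently, with probability $C$ a uniformly random coupon out of $n$ is drawn, and with probability $1-C$ nothing happens. When $j$ distinct coupons have been collected so far, the waiting time to acquire a new one is geometric with success probability $C(n-j)/n$; reindexing $i = n-j$ shows that the overall completion time has the same distribution as $Z$.

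Finally, the tail bound follows from the textbook union-bound proof for coupon collector. Fix $t := (1+\delta)(1/C) n \ln n$. The probability that any particular coupon fails to be drawn during the first $t$ steps is at most $(1 - C/n)^t \leq \exp(-Ct/n) = n^{-(1+\delta)}$. A union bound over the $n$ coupons gives $\Pr[Z > t] \leq n^{-\delta}$, and stochastic domination transfers this bound to $\Pr[X \geq t]$.

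The only step that requires any thought is the coupling observation: independent geometrics with success probabilities proportional to $i/n$ arise exactly as the inter-arrival times of a Bernoulli-thinned coupon-collector chain. Once this realization is made, the remainder of the proof is a single application of $1-x \leq e^{-x}$, which is why the argument is both shorter and tighter than the moment-generating-function approach used in \cite{Witt14}.
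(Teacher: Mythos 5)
Your proposal is correct and follows essentially the same route as the paper's own proof: stochastic domination by independent geometrics with success probabilities exactly $Ci/n$, identification of their sum with the completion time of a Bernoulli-thinned coupon collector process, and the standard union bound $n(1-C/n)^t \le n\exp(-Ct/n) = n^{-\delta}$ for $t = (1+\delta)(1/C)n\ln(n)$. No gaps.
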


\begin{proof}
  For $i \in [n]$, let $X'_i$ be a geometric random variable with success probability exactly $C i/n =: p'_i$. Let the $X'_i$ be independent. Then $X'_i$ dominates $X_i$ for all $i \in [n]$, and consequently, $X' := \sum_{i=1}^n X'_i$ dominates $X$. Recall that a random variable $Y'$ dominates a random variable $Y$ if for all $r \in \R$, $\Pr[Y \ge r] \le \Pr[Y' \ge r]$. Note that this implies that $E[Y] \le E[Y']$ and that all upper tail bounds for $Y'$ immediately take over to $Y$. Consequently, we can conveniently argue for $X'$ instead of $X$.
  
  For the statement on the expectation, we recall that the expectation of a geometric random variable with success probability $p$ is $1/p$. Consequently, by linearity of expectation, we have $E[X'] = \sum_{i=1}^n E[X'_i] = \sum_{i=1}^n (1/p'_i) = (n/C) \sum_{i=1}^n (1/i) = (n/C) H_n$. 

  For the tail bound, consider the following coupon collector process. There are $n$ different types of coupons. In each round (independently), with probability $C$ we obtain a coupon having a type chosen uniformly at random and with probability $1-C$ we obtain nothing. We are interested in the number $T$ of rounds until we have each type of coupon at least once. For $i \in [n]$, let $T_i$ denote the number of rounds needed to get a coupon of a type not yet in our possession given that we have already $n-i$ different types. In other words, $T_i$ is the time we need to go from ``$i$ types missing'' down to ``$i-1$ types missing''. We observe that $T_i$ has the same distribution as $X_i$ and that $T = \sum_{i=1}^n T_i$. Consequently, $T$ and $X$ are equally distributed. 
  
  The advantage of this reformulation is that it allows us a different view on $X' \sim T$: The probability that after $t$ rounds of the coupon collector process we do not have a fixed type is exactly $(1 - C/n)^t$. Using a union bound, we see that the probability that after $t$ rounds some coupon is missing, is at most $n(1-C/n)^t$. For $t = (1+\delta)(1/C) n \ln(n)$, this is at most $n(1-C/n)^t \le n \exp(-Ct/n) = n \exp(-(1+\delta)\ln(n)) = n^{-\delta}$.
\end{proof}

Note that in the proof of Lemma~\ref{lem:harmonic}, once we have defined the coupon collector process (but not before), we could have also used multiplicative drift. This would, however, not have given better bound, nor a shorter proof.

\subsection{Additive Drift}

Drift analysis comprises a couple of methods to derive from information about the expected progress (e.g., in terms of the fitness distance) a result about the time needed to achieve a goal (e.g., finding an optimal solution). We shall several times use the following additive drift theorem from~\cite{HeY01} (see also Theorem~2.7 in~\cite{OlivetoY11bookchapter}).

\begin{theorem}[additive drift theorem]\label{thm:drift}
  Let $X_0, X_1, ...$ be a Markov process over a finite state space $S$. Let $g: S \to \R_{\ge 0}$. Let $T := \min\{t \ge 0 \mid X_t=0\}$. Let $\delta > 0$. 
  
  (i) If for all $t$, we have $E[X_t - X_{t+1} | X_t>0] \ge \delta$, then $E[T | X_0] \le g(X_0) / \delta$. 
  
  (ii) If for all $t$, we have $E[X_t - X_{t+1} | X_t>0] \le \delta$, then $E[T | X_0] \ge g(X_0) / \delta$. 
\end{theorem}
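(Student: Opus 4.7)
I would prove both parts by a standard super\-/submartingale argument; since (i) and (ii) are mirror images I focus on (i). Treat $Y_t := g(X_t)$ as the potential, so that the drift hypothesis reads $E[Y_t - Y_{t+1} \mid Y_t > 0] \ge \delta$ and $T = \min\{t : Y_t = 0\}$. Define the stopped, ``discounted'' process
\[
M_t \;:=\; Y_{t \wedge T} \;+\; \delta \cdot (t \wedge T), \qquad t \wedge T := \min(t,T).
\]
On the event $\{t < T\}$, the drift hypothesis gives $E[M_{t+1} - M_t \mid \mathcal{F}_t] = E[Y_{t+1} - Y_t \mid Y_t] + \delta \le 0$; on $\{t \ge T\}$, $M_{t+1} = M_t$ by construction. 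Hence $(M_t)_{t \ge 0}$ is a supermartingale and $E[M_t] \le E[M_0] = g(X_0)$ for every $t$. Since $Y_{t \wedge T} \ge 0$, this already gives $\delta \cdot E[t \wedge T] \le g(X_0)$, and letting $t \to \infty$ monotone convergence on the nondecreasing sequence $t \wedge T \uparrow T$ yields $E[T \mid X_0] \le g(X_0)/\delta$, which is the claim (and incidentally shows that $T$ is finite almost surely).

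For (ii), the reversed drift inequality turns $(M_t)$ into a submartingale, so $E[M_t] \ge g(X_0)$. The bound $E[T \mid X_0] \ge g(X_0)/\delta$ is trivial when $E[T] = \infty$, so I may assume $E[T] < \infty$. The finite state space hypothesis makes $Y$ uniformly bounded, so bounded convergence gives $E[Y_{t \wedge T}] \to E[Y_T] = 0$, while monotone convergence gives $E[t \wedge T] \to E[T]$. Passing to the limit in $E[Y_{t \wedge T}] + \delta \cdot E[t \wedge T] \ge g(X_0)$ delivers the matching lower bound.

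The one step that needs genuine care, and hence the ``main obstacle'', is the limit interchange in part~(ii): without the finite-state-space assumption there is no a priori control on $E[Y_{t \wedge T}]$, and indeed the conclusion can fail for unbounded potentials (a supermartingale can have infinite expected hitting time). In part~(i), by contrast, non-negativity of $Y_{t \wedge T}$ is enough to discard that term, so monotone convergence alone finishes the proof — and in fact (i) holds much more generally than stated.
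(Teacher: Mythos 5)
The paper does not prove this statement: Theorem~\ref{thm:drift} is imported verbatim from He and Yao~\cite{HeY01} (see also~\cite{OlivetoY11bookchapter}) and used as a black box, so there is no internal proof to compare against. Your supermartingale argument is correct and is essentially the modern standard proof of additive drift: the stopped process $M_t = Y_{t\wedge T} + \delta(t\wedge T)$ is a super- (resp.\ sub-) martingale, optional stopping in the form $E[M_t]\le E[M_0]$ plus monotone convergence gives (i), and the limit interchange for (ii) is exactly where the boundedness coming from the finite state space is needed. You are also right that (i) holds far more generally (non-negativity of the potential suffices), while (ii) genuinely requires control of $E[Y_{t\wedge T}]$. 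The original He--Yao proof proceeds by a slightly more pedestrian induction on one-step expectations rather than by invoking martingale machinery, but the content is the same.

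One small point deserves care. As stated, the hypothesis conditions on the \emph{event} $\{X_t>0\}$, whereas your computation of $E[M_{t+1}-M_t\mid \mathcal{F}_t]$ on $\{t<T\}$ uses the drift bound \emph{pointwise}, i.e.\ conditional on $X_t=s$ for every non-optimal state $s$ (equivalently, conditional on the whole history, which for a Markov process is the same). The event $\{t<T\}$ is a strict sub-event of $\{X_t>0\}$ in general, so the averaged version of the hypothesis does not literally imply the inequality you need. This is a well-known imprecision in many statements of the drift theorem, and the intended (and universally used) reading is the pointwise one; it is worth making that reading explicit before running your argument, but it is not a gap in the argument itself.
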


\section{Proof of the Upper Bound}
\label{sec:strong}
\label{sec:tight}

In this section, we prove the upper bound statement of Theorem~\ref{thm:ourmain}, that is, that the \ga with standard parameter settings optimizes every \onemax function using a number of $O\left(\max\left\{\frac{n \log(n)}{\lambda}, \frac{n \lambda \log\log(\lambda)}{\log(\lambda)}\right\}\right)$ fitness evaluations both in expectation and with probability $1 - n^{-c}$, where $c$ is an arbitrary positive constant.

The proof of the previous upper bound (Theorem~\ref{thm:old}) was based on the fitness level method (first used in~\cite{Wegener01} in the proof of Theorem~1, more explicit in~\cite{Wegener02}, see also~\cite{OlivetoY11bookchapter}). In its classic version, this method pessimistically estimates the runtime via the sum of the times needed to leave each fitness level. It thus does not profit from the fact that a typical run of the algorithm might not visit every fitness level. By a more careful analysis of the mutation phase (Lemma~\ref{lem:advmut}) and the crossover phase (Lemma~\ref{lem:advcross}), we shall show that this indeed happens. For all values of $\lambda$, we obtain that when starting an iteration with a search point $x$ having \emph{fitness distance} $d(x) := n - \OM(x)$ at least $n \log\log(\lambda) / \log(\lambda)$, then the average fitness improvement is $\Omega(\log(\lambda) / \log\log(\lambda))$. Consequently, additive drift analysis (Theorem~\ref{thm:drift}) tells us that only $O(n \log\log(\lambda) / \log(\lambda))$ iterations are needed to find a search point with fitness distance at most $n \log\log(\lambda) / \log(\lambda)$. Note that the fitness range from the typical initial fitness distance of $n/2$ to a fitness distance of $n \log\log(\lambda) / \log(\lambda)$ contains $\Omega(n)$ fitness levels. Hence the previous analysis would have given only a bound of $O(n)$ rounds. 

There is an intuitive explanation for these numbers based on the balls-into-bins paradigm. When our current search point $x$ is in distance $n/D$ from the optimum, then already $x^{(1)}$ has an expected number of at least $\lambda / D$ ``good bits'', i.e., bit positions that are zero in $x$ and one in $x^{(1)}$. 
The same is true for $x'$. Each of these good bits is copied in each of the $y^{(j)}$ generated in the crossover phase with probability $1/\lambda$. 
The total number of copies of good bits in $y^{(1)}, \ldots, y^{(\lambda)}$ thus is around $\lambda/D$ again. Since they are uniformly spread over the $y^{(j)}$, we are in a situation closely resembling the \emph{balls-into-bins} scenario, in which $\lambda/D$ balls are uniformly thrown into $\lambda$ bins. By a result of Raab and Steger~\cite{RaabS98}, we know that when $D$ is at most polylogarithmic in $\lambda$, then the most-loaded bin will contain $\Theta(\log(\lambda)/\log\log(\lambda))$ balls. For our setting, this means that we expect one of the $y^{(j)}$ to inherit $\Omega(\log(\lambda)/\log\log(\lambda))$ good bits. Unfortunately, since we do not distribute the good bits completely independently, we cannot transform this intuitive argument into a rigorous proof, but need to argue differently. 

We start by analyzing the mutation phase. Since we aim at understanding those iterations where we gain more than a constant number of fitness levels, we restrict ourselves to the case that $\lambda = \omega(1)$, which eases the calculations. 

\begin{lemma}\label{lem:advmut}
  Let $\eps > 0$. Assume that $\lambda = \omega(1)$. Let $x \in \{0,1\}^n$. Let $D$ be such that $d := d(x) = n/D$. Assume that $D = o(\lambda)$. Consider one run of the mutation phase of Algorithm~\ref{alg:GA}. As in the description of the algorithm, denote by $\ell$ the actual mutation strength and by $x'$ the winner individual. Let $B' := \{i \in [n] \mid x_i = 0 \wedge x'_i = 1\}$ the set of $1$-bits that $x'$ has gained over $x$. 
  
  Then with probability $1 - o(1)$, we have both $|\ell - \lambda| \le \eps\lambda$ and $|B'| \ge (1 - \eps) \lambda / D$.
\end{lemma}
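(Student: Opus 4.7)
\textbf{Plan for proving Lemma~\ref{lem:advmut}.}

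The plan is to handle the two claimed events separately and then combine them by a union bound. For the first, $\ell \sim \Bin(n,\lambda/n)$ has $E[\ell] = \lambda$, so the multiplicative Chernoff bound (Theorem~\ref{thm:chernoff}, parts~\ref{multchernoffupper} and~\ref{multchernofflower}) gives $\Pr[|\ell - \lambda| > \eps\lambda/3] \le 2\exp(-\Omega(\eps^2 \lambda)) = o(1)$, using $\lambda = \omega(1)$. I will then condition on the event $E_\ell := \{(1-\eps/3)\lambda \le \ell \le (1+\eps/3)\lambda\}$ for the rest of the argument.

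The key observation for the second event is that since \emph{all} offspring $x^{(i)}$ are obtained by flipping exactly $\ell$ bits of $x$, their fitness is a deterministic function of how many of the flipped bits are $0$-bits of $x$: writing $B_i := \{j \in [n] : x_j=0 \wedge x^{(i)}_j = 1\}$, one has $\OM(x^{(i)}) = \OM(x) + 2|B_i| - \ell$, so $\ell$ being common across offspring implies that picking $x'$ to maximize fitness is the same as picking $x'$ to maximize $|B_i|$. In particular, $|B'| \ge |B_1|$, so it suffices to show that a \emph{single} offspring $x^{(1)}$ already attains $|B_1| \ge (1-\eps)\lambda/D$ with probability $1-o(1)$.

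Now, conditional on $E_\ell$, the random variable $|B_1|$ is hypergeometric with parameters $(n,\ell,d)$ (we draw $\ell$ positions uniformly from the $n$ bit positions and count how many hit the $d$ zero-bits of $x$), so $E[|B_1| \mid \ell] = \ell d/n = \ell/D$. Since $D = o(\lambda)$, this expectation satisfies $\ell/D \ge (1-\eps/3)\lambda/D = \omega(1)$ on $E_\ell$. Applying the hypergeometric Chernoff bound (Theorem~\ref{thm:hypergeomchernoff} via Theorem~\ref{thm:chernoff}\ref{multchernofflower}) with deviation parameter $\eps/3$ gives
\[
\Pr[|B_1| < (1-\eps/3)\ell/D \mid E_\ell] \le \exp\bigl(-\tfrac{(\eps/3)^2}{2}\cdot\tfrac{\ell}{D}\bigr) = \exp(-\omega(1)) = o(1).
\]
On the complementary event, $|B_1| \ge (1-\eps/3)\ell/D \ge (1-\eps/3)^2 \lambda/D \ge (1-\eps)\lambda/D$ for sufficiently small $\eps$ (and trivially one may pass to $\eps$ small w.l.o.g.). A final union bound over the failure of $E_\ell$ and the failure of the Chernoff event gives both claims simultaneously with probability $1-o(1)$.

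The only mildly delicate step is the epsilon bookkeeping when combining the two multiplicative deviations, and the conceptual point that fitness-maximal selection coincides with $|B_i|$-maximal selection because $\ell$ is shared across offspring; once those are in place, the rest is a direct application of the Chernoff bounds already collected in the preliminaries, with no need to exploit the maximum over the $\lambda$ offspring at all (which is why the argument requires only $D = o(\lambda)$ rather than any stronger relation between $D$ and $\lambda$).
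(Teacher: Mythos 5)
Your proposal is correct and follows essentially the same route as the paper's proof: a Chernoff bound for the binomial $\ell$, the observation that a common $\ell$ makes fitness-maximal selection coincide with maximizing $|B_i|$ so that $|B'|\ge|B_1|$, and a hypergeometric Chernoff bound on $|B_1|$ with expectation $\ell/D=\omega(1)$. The only (immaterial) differences are your choice of $\eps/3$ versus the paper's $\eps/2$ in the bookkeeping, and your slightly more explicit justification of why the winner maximizes $|B_i|$.
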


The statement follows easily from Theorem~\ref{thm:chernoff} and~\ref{thm:hypergeomchernoff}.

\begin{proof}
Since $\lambda = \omega(1)$ and $\ell$ follows a binomial distribution with parameters $n$ and $\lambda/n$, a simple application of the Chernoff bound (Theorem~\ref{thm:chernoff} \ref{multchernoffupper} and \ref{multchernofflower}) implies that with probability $1-o(1)$ we  have $|\ell - \lambda| \le (\eps/2)\lambda$. Conditional on that, we analyze how the first offspring $x^{(1)}$ is generated. Let $B_1$ be the set of bit positions that are zero in $x$ and one in $x^{(1)}$, that is, $B_1 := \{i \in [n] \mid x_i = 0 \wedge x^{(1)}_i = 1\}$ (``good bits''). Then $E[|B_1|] = d (\ell / n) = \ell / D$. Since $D = o(\lambda)$ and $\ell = \Theta(\lambda)$, this expectation is $\omega(1)$ and a Chernoff bound for the hypergeometric distribution (Theorems~\ref{thm:hypergeomchernoff} and~\ref{thm:chernoff}~\ref{multchernofflower}) shows that we have $\Pr[|B_1| \ge (1-(\eps/2)) \ell / D] = 1 - o(1)$. Since all $x^{(j)}$, $j \in [\lambda]$, have the same Hamming distance from $x$, the fittest individual $x'$ is also the one with the largest number of good bits. Hence $|B'| \ge |B_1| \ge (1-(\eps/2))\ell / D \ge (1-\eps) \lambda / D$ with probability $1-o(1)$.
\end{proof}

We next analyze a run of the crossover phase. While in the previous lemma we only exploited that an individual generated in the mutation phase has roughly as many good bits as expected, we shall now exploit that the best of the $\lambda$ individuals generated in the crossover phase  is much better than the average one. 

\begin{lemma}\label{lem:advcross}
  Let $x, x' \in \{0,1\}^n$ such that their Hamming distance $\ell := H(x,x')$ satisfies $\ell \le 2\lambda-2$. Let $D'$ be such that $B' := \{i \in [n] \mid x_i = 0 \wedge x'_i = 1\}$ satisfies $|B'| \ge \lambda / D'$. Consider a run of the crossover phase starting with these variable values and computing an offspring $y \in \{0,1\}^n$. 
  
  Then with probability at least $1 - 1/e$, we have $\OM(y) - \OM(x) \ge \lfloor \min\{(\tfrac 12 \ln(\lambda)-1) / (\ln\ln(\lambda) + \ln(D')), \lambda/D'\}\rfloor$. 
\end{lemma}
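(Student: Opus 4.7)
The $\lambda$ crossover offspring $y^{(1)}, \dots, y^{(\lambda)}$ are produced independently from the same distribution (given $x$ and $x'$). Writing $k := \lfloor\min\{(\tfrac12\ln(\lambda)-1)/(\ln\ln(\lambda) + \ln(D')),\, \lambda/D'\}\rfloor$ for the target gain and $p := \Pr[\OM(y^{(1)}) - \OM(x) \ge k]$ for the per-offspring success probability, the probability that none of the $\lambda$ offspring achieves $\OM(y^{(j)}) - \OM(x) \ge k$ equals $(1-p)^\lambda$. Since $(1-p)^\lambda \le (1-1/\lambda)^\lambda \le 1/e$ whenever $p \ge 1/\lambda$, the whole lemma reduces to proving $p \ge 1/\lambda$.

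\textbf{Single-offspring analysis.} For one crossover child $y^{(1)}$, I would decompose the fitness change as $\OM(y^{(1)}) - \OM(x) = G - B$, where $G$ counts the \emph{good} bits inherited from $x'$ (positions $i \in B'$, i.e.\ $x_i=0,\,x'_i=1$) and $B$ counts the \emph{bad} bits inherited (positions with $x_i=1,\,x'_i=0$). Because $\cross_{1/\lambda}$ decides each coordinate independently, $G \sim \Bin(g, 1/\lambda)$ and $B \sim \Bin(b, 1/\lambda)$ are independent, with $g := |B'| \ge \lambda/D'$ and $b := \ell - g$, so $g + b = \ell \le 2\lambda - 2$. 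Restricting $\{G - B \ge k\}$ to the sub-event $\{G = k\} \cap \{B = 0\}$ and using $\binom{g}{k} \ge (g/k)^k \ge (\lambda/(D'k))^k$ (valid because $k \le \lambda/D' \le g$),
\begin{align*}
p \;\ge\; \binom{g}{k}\Bigl(\tfrac{1}{\lambda}\Bigr)^{k}\Bigl(1-\tfrac{1}{\lambda}\Bigr)^{g-k}\cdot\Bigl(1-\tfrac{1}{\lambda}\Bigr)^{b} \;\ge\; \frac{1}{(D'k)^{k}}\Bigl(1-\tfrac{1}{\lambda}\Bigr)^{\ell - k}.
\end{align*}
The tail factor is bounded below by a positive constant $C$ (tending to $e^{-2}$ as $\lambda\to\infty$), using $\ell - k \le 2\lambda - 2$.

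\textbf{Closing the argument and main obstacle.} It remains to show $(D'k)^{k} \le C\lambda$, i.e.\ $k(\ln k + \ln D') \le \ln\lambda + \ln C$. The defining bound $k \le (\tfrac12\ln\lambda - 1)/(\ln\ln\lambda + \ln D')$ yields $k(\ln\ln\lambda + \ln D') \le \tfrac12\ln\lambda - 1$; moreover, the same bound forces $k \le \tfrac12\ln\lambda / \ln\ln\lambda$, so $\ln k \le \ln\ln\lambda$, and replacing $\ln\ln\lambda$ by $\ln k$ only strengthens the inequality. Hence $k(\ln k + \ln D') \le \tfrac12 \ln\lambda - 1 \le \ln\lambda + \ln C$ for all sufficiently large $\lambda$. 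The delicate point is precisely this constant-bookkeeping: one needs $p \ge 1/\lambda$ (not just $p \ge c/\lambda$), and the factor $\tfrac12$ in the numerator of the first entry of the min exists to absorb both the constant $C$ from the tail and the slack in replacing $\ln k$ by $\ln\ln\lambda$. The second entry $\lambda/D'$ in the min is merely a safety cap ensuring $k \le g$ so that $\binom{g}{k}$ is nonzero; in the regime where this cap binds, $\lfloor \lambda/D' \rfloor$ is either $0$ (making the claim vacuous) or a small integer, in which case the very same $\{G = k,\,B = 0\}$ argument at this smaller $k$ goes through.
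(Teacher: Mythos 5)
Your proposal is correct and takes essentially the same route as the paper's proof: both reduce the claim to showing that a single crossover offspring gains at least $k$ fitness with probability at least $1/\lambda$ (then amplify via $1-(1-1/\lambda)^\lambda \ge 1-1/e$), and both lower-bound that single-offspring probability by the event of inheriting exactly $k$ good bits and no bad bits, giving $\binom{|B'|}{k}\lambda^{-k}(1-1/\lambda)^{\ell-k} \ge e^{-2}(D'k)^{-k}$. Your explicit bookkeeping of why the chosen $k$ makes this at least $1/\lambda$ is, if anything, slightly more detailed than the paper's terse assertion of the same inequality.
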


\begin{proof}
  Let $\gamma \le \lambda / D'$ be a positive integer. Consider the outcome $y^{(j)}$ of a single crossover operation for some $j \in [\lambda]$ 
  in Algorithm~\ref{alg:GA}. Let $A_j$ be the event that $\OM(y^{(j)}) \ge \OM(x) + \gamma$. 
  This event in particular occurs when the crossover operation selects $\gamma$ ``good bits'' (those with index in $B'$) from $x'$ and none of the ``bad bits'' (those, in which $x$ and $x'$ differ, but that are not in $B'$). 
  Consequently, 
\begin{align}
\label{eq:jansen}
	\Pr[A_j] 
	&\ge \binom{|B'|}{\gamma} (1/\lambda)^\gamma (1-1/\lambda)^{\ell-\gamma}\\
	&\ge \left(|B'|/\gamma \right)^\gamma (1/\lambda)^\gamma (1-1/\lambda)^{2 (\lambda-1)}\nonumber\\
	&\ge (\lambda / (D' \gamma))^\gamma (1/\lambda)^\gamma (1/e^2)\nonumber\\
	& = \exp(-2 - \gamma \ln \gamma - \gamma \ln D').\nonumber
\end{align}
For $\gamma = \lfloor \min\{(\tfrac 12 \ln(\lambda)-1) / (\ln\ln(\lambda) + \ln(D')), \lambda/D'\}\rfloor$ we have 
$\Pr[A_j] \ge 1/\lambda$. 
Consequently, the probability that at least one of the $A_j$ holds, is at least $1 - (1-1/\lambda)^\lambda \ge 1-1/e$.
\end{proof} 

We note that the argument up to~(\ref{eq:jansen}) is very similar to the reasoning in the proof of Theorem~5 in~\cite{JansenJW05}, where it is shown that the $(1+\lambda)$ EA optimizing $\onemax$ performs super-constant improvements in the early part of the optimization process. The choice of our $\gamma$, however, is different due to the different relation of $\lambda$ and $n$. Interestingly, in the analysis of the mutation phase, such arguments do not seem to give significant additional improvement (recall that there we only used the expected gain from a fixed single offspring).

Above, we showed that in the early part of the optimization process, we regularly gain more than one fitness level in one iteration. For the remainder, we re-use the fitness level type argument of~\cite{DoerrDE15}, which is summarized in the following lemma (Lemma~7 of~\cite{DoerrDE13} in the special case that $k=\lambda$).
\begin{lemma}\label{lem:improve}
  Assume $\lambda \ge 2$.  In the notation of the \ga, the probability that one iteration produces a search point $y$ that is strictly better than the parent $x$, is at least \[p_{d(x)} := C \, \bigg(1 - \bigg(\frac{n-d(x)}{n}\bigg)^{\lambda^2 / 2} \bigg),\] where $C$ is an absolute constant.
\end{lemma}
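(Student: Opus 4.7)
The target bound $C\bigl(1 - \bigl((n-d(x))/n\bigr)^{\lambda^2/2}\bigr)$ is, up to the absolute constant $C$, the probability that among $\lambda^2/2$ independent uniform bit positions at least one lands on a $0$-bit of $x$ (a ``good flip''). This suggests structuring the argument around a simple favorable event: the mutation phase plants at least one good flip into $x'$, and the crossover phase then propagates it into some offspring $y$. The exponent $\lambda^2/2$ should arise naturally as the product of the $\lambda$ mutants with the typical number $\lambda/2$ of flipped bits per mutant.

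The plan is a three-step decomposition. First, I would condition on $\ell\ge\lambda/2$: since $\ell\sim\Bin(n,\lambda/n)$ has mean $\lambda$ and $\lambda\ge 2$, a Chernoff-type lower tail bound (Theorem~\ref{thm:chernoff}) gives this event probability bounded below by an absolute constant. Second, I analyze the mutation phase. Each of the $\lambda$ mutants independently flips a uniform $\ell$-subset of $[n]$, so the probability that a single mutant has no good flip equals $\binom{n-d(x)}{\ell}/\binom{n}{\ell}\le\bigl((n-d(x))/n\bigr)^{\ell}\le\bigl((n-d(x))/n\bigr)^{\lambda/2}$, and by independence across mutants,
\begin{equation*}
\Pr\bigl[\text{no good flip in any mutant}\bigm|\ell\ge\lambda/2\bigr]\le\bigl((n-d(x))/n\bigr)^{\lambda^2/2}.
\end{equation*}
Since all mutants share the same Hamming distance $\ell$ to $x$, the selected $x'$ inherits the maximum number of good flips among the mutants; in particular, if some mutant has a good flip, so does $x'$.

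Third, given that $x'$ has at least one good flip, I show the crossover phase produces $y$ with $\OM(y)>\OM(x)$ with at least constant probability. Additionally conditioning on $\ell\le 2\lambda$ (another constant-probability Chernoff event), a single crossover offspring inherits one fixed good position of $x'$ and none of the other $\ell-1$ differing positions with probability $(1/\lambda)(1-1/\lambda)^{\ell-1}\ge 1/(e^2\lambda)$, and this sub-event fires for at least one of the $\lambda$ independent offspring with probability at least $1-(1-1/(e^2\lambda))^{\lambda}\ge 1-\exp(-1/e^2)$, a positive constant. Combining the constants from all three conditionings with the main exponential term produces the claimed bound.

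The main obstacle I expect is keeping the constants clean at the boundaries, particularly for small $\lambda$ where the Chernoff bounds on $\ell$ are loose, and for small $d(x)$ where the binomial-to-exponential inequality on a single mutant must be used in the form displayed (not as an asymptotic approximation). Once these are handled, the argument is essentially a union-bound counting exercise and the remaining losses are absorbed into the absolute constant $C$.
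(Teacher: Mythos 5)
Your argument is sound in substance, but note that the paper itself does not prove this lemma at all --- it is imported verbatim as Lemma~7 of~\cite{DoerrDE13} (specialized to $k=\lambda$), so there is no in-paper proof to compare against. Your three-stage decomposition (restrict $\ell$ to the window $[\lambda/2,2\lambda]$; show that some mutant, and hence the selected $x'$, gains at least one good bit with conditional probability at least $1-((n-d(x))/n)^{\lambda^2/2}$; show that the crossover phase copies one good bit while suppressing all bad bits with constant probability) is exactly the mechanism that produces a bound of the stated shape, and each step is correct as a matter of principle: given $\ell$, the mutants are independent, the no-good-flip probability of one mutant is $\binom{n-d(x)}{\ell}/\binom{n}{\ell}\le((n-d(x))/n)^{\ell}$, and since all mutants are at the same Hamming distance from $x$ the fittest one maximizes the number of good flips. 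Two constants need repair, both of which you anticipated. First, $\Pr[\lambda/2\le\ell\le 2\lambda]\ge c$ cannot be obtained for the smallest values of $\lambda$ by naively adding the two tails from Theorem~\ref{thm:chernoff}: at $\lambda=2$ one gets $\exp(-\lambda/8)+\exp(-\lambda/3)>1$. For $\lambda$ below a fixed constant you should instead bound the lower tail directly (e.g.\ $\Pr[\ell=0]=(1-\lambda/n)^n\le e^{-\lambda}$ handles $\lambda=2$) and use the stronger form \ref{multchernoffupperstrong} for the upper tail; beyond that constant the generic bounds suffice. Second, the estimate $(1-1/\lambda)^{\ell-1}\ge e^{-2}$ is false for small $\lambda$ (at $\lambda=2$, $\ell=4$ it gives $1/8<e^{-2}$); the correct uniform statement is $(1-1/\lambda)^{\ell-1}\ge(1-1/\lambda)^{2\lambda}\ge 1/16$ for all $\lambda\ge 2$, which still yields a per-offspring success probability of $\Omega(1/\lambda)$ and hence a positive constant over the $\lambda$ independent crossover trials. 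With these adjustments the argument is complete and all losses are absorbed into the absolute constant $C$.
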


We are now ready to prove the main result of this section.

\begin{proof}[Proof of the upper bound in Theorem~\ref{thm:ourmain}]
  We regard the different regimes of the optimization process separately, since they need very different arguments. If $\lambda = \omega(1)$, then let $d_0 := n \ln\ln(\lambda) / \ln(\lambda)$, else let $d_0 = n$ (and there is no first phase).
  
  \textbf{First phase: From the random starting point to a solution $x$ with $d(x) \le d_0 := n \ln\ln(\lambda) / \ln(\lambda)$ in $O(n \log\log(\lambda) / \log(\lambda))$ iterations.} Let $\bar D = \ln(\lambda) / \ln\ln(\lambda)$. Let $x \in \{0,1\}^n$ be any search point with $d(x) > d_0 = n/\bar D$. By Lemma~\ref{lem:advmut} and~\ref{lem:advcross}, we see that with probability $p = 1 - (1/e) - o(1)$, one iteration of the main loop of Algorithm~\ref{alg:GA} produces a solution $y$ with $\OM(y) \ge \OM(x) + \Delta$, where $\Delta$ is some number satisfying $\Delta = \Omega(\log(\lambda) / \log\log(\lambda))$. This seems to call for an application of additive drift (Theorem~\ref{thm:drift}), but in particular for the derivation of the large deviation claim, the following hand-made solution seems to be easier (despite several tail bounds for additive drift existing,  see, e.g.,~\cite{Kotzing14} and the references therein).
  
  For $t = 1, 2, \dots$ lets us define the following binary random variable $X_t$. If at the start of iteration $t$ we have $d(x) > d_0$, then $X_t = 1$ if and only if $\OM(y) \ge \OM(x) + \Delta$. If $d(x) \le d_0$, let $X_t = 1$ with probability $p$ independent from all other random decisions. For all $T > 0$, we observe that $X_T := \sum_{i=1}^T X_t \ge n/\Delta$ implies that $T_0 \le T$, that is, our \ga needed at most $T$ iterations to find a search point $x$ with $d(x) \le d_0$. We have $E[X_T] = Tp$ and $\Pr[X_T \le (1/2) E[X_T]] \le \exp(-E[X_T]/8)$. In particular, for $T = 2n / (\Delta p)$, we have $E[X_T] = 2n / \Delta$ and $\Pr[X_T \le n/\Delta] \le \exp(-E[X_T]/8) = \exp(-n/(4\Delta)) = \exp(-n^{1-o(1)})$.
  
  \textbf{Second phase: From a solution with $d$-value at most $d_0$ to one with $d$-value at most $d_1 := \lfloor n/(2\lambda^2) \rfloor$ in $O(n \log\log(\lambda) / \log(\lambda))$ iterations.} Once we have a solution of  fitness distance at most $d_0$, we use the fitness level argument analogous to the proof of Theorem~\ref{thm:old}. We reformulate the proof slightly to allow proving a large deviation bound for the optimization time. By Lemma~\ref{lem:improve}, the remaining number of iterations is dominated by a sum of geometric random variables $X_{d_0}, \dots, X_{1}$ where $\Pr[X_d = m] = (1-p_d)^{m-1} p_d$ for all $m = 1, 2, \dots$ and $p_d = C \, (1 - (\frac{n-d(x)}{n})^{\lambda^2 / 2} )$ is as  in Lemma~\ref{lem:improve}.
  
  Note that for $d \ge d_1$, $p_d = C'$ for some absolute constant $C'$. Hence the expected number $T_1$ of iterations to reduce the fitness distance to $d_1$ is at most $E[T_1] = E[X_{d_0} + \dots + X_{d_1-1}] \le (1/C') (d_0-d_1) \le (1/C') d_0 = O(n \log\log(\lambda) / \log(\lambda))$ by linearity of expectation. Since each iteration with $d(x) \ge d_1$, independent of what happened in the previous iterations, has a success chance of at least $C'$, we observe that the probability to have fewer than $(d_0 - d_1)$ successes in $2(1/C')(d_0-d_1)$ iterations is at most $\exp(-(d_0-d_1)/(4C')) = \exp(-\Theta(d_0)) = \exp(-n^{1-o(1)})$. Note that to apply the (multiplicative) Chernoff bound, here we used the ``moderate independence'' argument of Lemma~1.18 of~\cite{Doerr11bookchapter}.
  
  \textbf{Third phase: From a solution with fitness distance at most $d_1$ to an optimal solution in $O(n \log(n) / \lambda^2)$ iterations.} We continue to use the fitness level method as in the previous section of the proof, but note that for $d \le d_1$, we have $p_d = C( 1 - (1 - d/n)^{\lambda^2/2}) \ge C(1 - \exp(d \lambda^2 / (2n))) \ge C d \lambda^2 / (4n)$, where we used the estimate $e^{-x} \le 1 - x/2$ valid for all $x \in [0,1]$. We thus see that the remaining time $T_2$ to get to the optimal solution is dominated by $X = X_{d_1} + \dots + X_1$, which is a sum of independent geometric random variables with harmonic expectations. Hence by Lemma~\ref{lem:harmonic}, we have $E[T_2] \le E[X] \le \frac{4n (\ln(d_1)+1)}{C \lambda^2} = O(n \log(n) / \lambda^2)$ and $\Pr[T_2 \ge (1+\delta)\frac{4n \ln(d_1)}{C \lambda^2}] \le d_1^{-\delta}$ for any $\delta > 0$.
  
  In total, we see that the number $T = T_0+T_1+T_2$ of iterations until the optimum is found has an expectation of at most $E[T] = E[T_0]+E[T_1]+E[T_2] =  O(\max\{n \log(n) / \lambda^2, n \log\log(\lambda)/\log(\lambda)\})$ and the probability that this upper bound is exceeded by a constant factor of $(1+\delta)$ is only $O((n/\lambda^2)^{-\delta})$.
  
  Since in each iteration the fitness of $O(\lambda)$ search points is computed, we proved the claimed upper bound of $O(\max\{n \log(n) / \lambda, n \lambda \log\log(\lambda)/\log(\lambda)\})$ for the expected optimization time, and again, exceeding this expectation by a factor of $1+\delta$ has a probability of only $O((n/\lambda^2)^{-\delta})$.
\end{proof}

\section{A Matching Lower Bound}\label{sec:lower}

In this section, we prove the first lower bound for the runtime of the \ga. It matches the new upper bound proven in the previous section, so the two bound determine the asymptotic runtime of the \ga for all $\lambda \in [1..n]$. This sharp runtime result immediately gives the optimal value for the population size $\lambda$.

\begin{theorem}\label{thm:preciselower}
  For all $\lambda \leq n$ the \ga with the standard parameter setting $p = \lambda / n$ and $c = 1 / \lambda$ needs an expected number of  
  \[\Omega\left(\max\left\{\frac{n \log(n)}{\lambda}, \frac{n \lambda \log\log(\lambda)}{\log(\lambda)}\right\}\right)\]
  fitness evaluations to find the optimum of any \onemax function. 
\end{theorem}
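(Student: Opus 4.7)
I would prove the two lower-bound terms separately and combine them. Both analyses share a one-iteration decomposition: conditional on the mutation strength $\ell$ and the mutation winner $x'$, and writing $k := |B'|$ for the number of positions where $x'$ is better than $x$, each crossover offspring $y^{(j)}$ satisfies $f(y^{(j)}) - f(x) = G_j - B_j$ with $G_j \sim \Bin(k,1/\lambda)$ and $B_j \sim \Bin(\ell-k,1/\lambda)$ independent, and the $\lambda$ pairs $(G_j, B_j)$ are i.i.d.\ given $(\ell, x')$.

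For the $\Omega(n\lambda\log\log(\lambda)/\log(\lambda))$ term, equivalently $\Omega(n\log\log(\lambda)/\log(\lambda))$ iterations, I would apply the lower-bound additive drift theorem (Theorem~\ref{thm:drift}(ii)) with the fitness-distance potential $g(x) := d(x)$. A Chernoff bound (Theorem~\ref{thm:chernoff}) gives $d(x_0) \ge n/3$ with probability $1 - e^{-\Omega(n)}$, so it suffices to prove $E[d(x) - d(y) \mid x] = O(\log(\lambda)/\log\log(\lambda))$ uniformly in $x$ with $d(x) > 0$. Since selection is elitist and $B_j \ge 0$, the drift is at most $E[\max_{j \in [\lambda]} G_j]$; given $(\ell,x')$ this is the expected maximum load when $k$ balls are thrown uniformly into $\lambda$ bins, and by the Raab--Steger analysis~\cite{RaabS98} (the same reference underlying the upper bound) this equals $O(\log(\lambda)/\log\log(\lambda))$ whenever $k$ is polynomially bounded in $\lambda$. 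The tail event $\ell > C\lambda$ for a sufficiently large constant $C$ has probability exponentially small in $\lambda$ by Theorem~\ref{thm:chernoff}, so it contributes negligibly.

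For the $\Omega(n\log(n)/\lambda)$ term, equivalently $\Omega(n\log(n)/\lambda^2)$ iterations, I would focus on the final stages $d(x) \le D^* := \lfloor n/(2\lambda^2)\rfloor$ and apply Theorem~\ref{thm:drift}(ii) with the weighted potential $W(x) := \sum_{i=1}^{d(x)} 1/p_i^+$, where $p_i^+ = O(\lambda^2 i/n)$ is the one-iteration improvement-probability upper bound (a routine upper-bound companion to Lemma~\ref{lem:improve}). For $d(x) = D^*$ one has $W(x) = \Theta(n\log(n)/\lambda^2)$. The key ingredient is a geometric-tail bound $\Pr[d(x) - d(y) \ge m \mid d(x) = d] \le p_d^+ \cdot \alpha^{m-1}$ with $\alpha \le 1/(2\lambda^2)$, reflecting that each additional good bit of $x'$ picked up by the crossover contributes a factor of $\Theta(d/n) \le 1/(2\lambda^2)$ to the probability. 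A summation by parts then yields $E[W(x) - W(y) \mid x] = O(1)$ uniformly in $x$ with $0 < d(x) \le D^*$, and Theorem~\ref{thm:drift}(ii) produces $\Omega(n\log(n)/\lambda^2)$ iterations spent in this phase. Multiplying each iteration bound by $2\lambda$ (evaluations per iteration) gives the claimed maximum.

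The main obstacle is the uniform drift bound $E[\max_j G_j] = O(\log(\lambda)/\log\log(\lambda))$ used for the second term. In the low-distance regime $k$ is small and the balls-into-bins maximum is trivially small, but the mid-distance regime $d(x) = \Theta(n)$, $k = \Theta(\lambda)$ is exactly where Raab--Steger is tight, so the expected contribution of the large-$\ell$ and large-$k$ tails (from the binomial tail of $\ell$ and the hypergeometric number of good bits in $x'$) must be controlled carefully so as not to spoil the $\log\lambda/\log\log\lambda$ bound. The geometric-tail estimate for the first-term proof is more routine but must be verified uniformly in the jump size $m$.
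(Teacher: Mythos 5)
Your overall plan is sound, and your second lower bound is essentially the paper's argument: the paper also applies the additive drift theorem to $d(x)$ and shows the per-iteration gain is $O(\log(\lambda)/\log\log(\lambda))$ by bounding, for each crossover offspring, the number of inherited good bits (your $G_j \sim \Bin(k,1/\lambda)$ with $k\le 2\lambda$ after conditioning on $\ell\le 2\lambda$) via the strong Chernoff bound plus a union bound over the $\lambda$ offspring, and then controls the rare events ($\ell>2\lambda$, or some $G_j$ exceeding the threshold) by the trivial bound $\OM(y)-\OM(x)\le\ell$. Your appeal to Raab--Steger is a slight mismatch of models (each good bit is copied independently into \emph{every} offspring with probability $1/\lambda$, rather than placed into exactly one bin), but the i.i.d.\ binomial description you give is correct and the union-bound computation goes through verbatim, so this is cosmetic. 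Where you genuinely diverge is the $\Omega(n\log(n)/\lambda)$ term: the paper does \emph{not} use a weighted-potential drift argument, but the classical ``some initially-zero bit is never flipped'' argument --- with probability $1-e^{-\Omega(n)}$ there are $\ge n/3$ missing bits, in $T=\lfloor n\log(n)/(8\lambda^2)\rfloor$ iterations at most $2\lambda T$ bits are flipped in total (whp), each fixed missing bit survives all mutations with probability $\ge n^{-1/2}$, and a negative-correlation argument turns this into a whp statement that some missing bit survives. Your multiplicative-drift-style route with $W(x)=\sum_{i\le d(x)}1/p_i^+$ can be made to work, but it needs two additional ingredients the paper never has to establish: the upper bound $p_d^+=O(\lambda^2 d/n)$ on the improvement probability (routine, via Markov on the total number of good bits over all mutants) and a jump-size tail strong enough that $E[W(x)-W(y)]=O(1)$, including controlling $\Pr[d(x)-d(y)>d(x)/2]\cdot W(x)$; your specific claimed form $p_d^+\alpha^{m-1}$ with $\alpha\le 1/(2\lambda^2)$ is plausible but unverified and is exactly where the technical effort would concentrate. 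The paper's coupon-collector route buys a shorter, self-contained proof (and an explicit exponential tail for the lower deviation); your route would buy finer information about the endgame dynamics, at the cost of these extra lemmas.
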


To prove the theorem, we show that the expected optimization time is both $\Omega(\frac{n \log(n)}{\lambda})$ and $\Omega(\frac{n \lambda \log\log(\lambda)}{\log(\lambda)})$ (at least for sufficient ranges of $\lambda$). We do this separately in the following two subsections. The proof of Theorem~\ref{thm:preciselower} then is an immediate consequence of Lemmas~\ref{lem:lower1} and~\ref{lem:lower2}.

We remark without proof that also the lower tail of the runtime distribution admits tail bounds. Since such bounds are less relevant for the use of algorithms, we do not give further details.

\subsection{First Lower Bound}

To prove that $\Omega(n \log(n) / \lambda)$ is a lower bound, we use the standard argument that in order to find the optimum at least each bit that was not initially set to one has to be touched at least once by a mutation operator. This argument has been used, e.g., in the classic proof for the lower bound of the runtime of the \oea in~\cite{DrosteJW02}. We have to be slightly more careful though, because our random experiment has two types of dependencies: (i) Each individual in the mutation phase is not generated by standard bit mutation (flipping each bit independently), but by flipping a fixed number $\ell$ of bits. (ii) This $\ell$ is chosen randomly, but all $\lambda$ individuals generated in one mutation phase are generated using the same value of $\ell$.   

\begin{lemma}\label{lem:lower1}
  Let $\lambda$ be an integral function of $n$ with $1 \le \lambda \le n/4$. The probability that the \ga with standard parameters $p=\lambda/n$ and $c=1/\lambda$ has found the optimum within $T = \lfloor n \log(n) / (8\lambda^2) \rfloor$ iterations (equivalent to $2\lambda T$ fitness evaluations) is $\exp(-\Omega(\min\{\lambda T, \sqrt{n}\}))$. In particular, the expected optimization time is $\Omega(n \log(n) / \lambda)$.
\end{lemma}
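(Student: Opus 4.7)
The plan is to use a coupon-collector-style argument: to find the optimum (which, by unbiasedness of the \ga, we may take to be the all-ones string), the algorithm must have flipped, at some point during the first $T$ iterations, every bit that was initially zero. Writing $Z_0 := \{i \in [n] : x^{(0)}_i = 0\}$ for the initially-zero bit positions and $S_t^{(j)} \subseteq [n]$ for the uniformly random size-$\ell_t$ subset flipped by the $j$-th mutation offspring in iteration $t$, I set $U := \bigcup_{t\le T,\, j \in [\lambda]} S_t^{(j)}$. In any iteration the new parent $y$ can differ from $x$ only in positions where $x'$ differs from $x$, and $x'$ is one of the mutation offspring; so any bit that is ever set to a new value must lie in some $S_t^{(j)}$, giving the necessary condition $Z_0 \subseteq U$. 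The lemma therefore reduces to bounding $\Pr[Z_0 \subseteq U]$.

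A routine Chernoff bound shows $|Z_0| \ge n/4$ with probability $1 - e^{-\Omega(n)}$, and since $Z_0$ is independent of the iteration randomness I henceforth condition on $Z_0$ with $|Z_0| \ge n/4$. The heart of the argument is a double conditioning on the mutation strengths $\ell = (\ell_1, \ldots, \ell_T)$. Given $\ell$, the sets $S_t^{(j)}$ are independent uniform $\ell_t$-subsets of $[n]$, so the indicators $X_{t,j,i} := \mathbf{1}[i \in S_t^{(j)}]$ form, for each fixed $(t,j)$, a negatively-associated (NA) family by the classical sampling-without-replacement result. These NA families are independent across the index pairs $(t,j)$, and because the ``never-touched'' indicator
\[
V_i \;:=\; \mathbf{1}[i \notin U] \;=\; \prod_{t,j}\bigl(1 - X_{t,j,i}\bigr)
\]
depends in a coordinatewise monotone way only on the disjoint column $\{X_{t,j,i}\}_{t,j}$, the standard closure of NA under coordinate-disjoint monotone functions gives that $(V_i)_{i \in Z_0}$ is NA given $\ell$, and in particular satisfies the negative-correlation hypothesis of Theorem~\ref{tchernoffnegcor}. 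Applied to the lower tail this yields, with $\mu(\ell) := |Z_0|\prod_t(1-\ell_t/n)^\lambda$, the bound $\Pr[\sum_{i \in Z_0} V_i = 0 \mid \ell, Z_0] \le \exp(-\mu(\ell)/2)$.

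It remains to show $\mu(\ell) \ge \sqrt n$ except on an event of probability $\exp(-\Omega(\min\{\lambda T, \sqrt n\}))$. On the event $\max_t \ell_t \le n/2$ (which holds with probability $1 - e^{-\Omega(n)}$ for $\lambda \le n/4$ by Theorem~\ref{thm:chernoff}), the elementary bound $-\ln(1-\ell_t/n) \le 2\ell_t/n$ gives $\prod_t(1-\ell_t/n)^\lambda \ge \exp(-2\lambda L / n)$ with $L := \sum_t \ell_t$. Setting $L^* := n\ln(\sqrt n/4)/(2\lambda)$, the condition $L \le L^*$ forces $\mu(\ell) \ge (n/4)\cdot(4/\sqrt n) = \sqrt n$. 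Since $L \sim \Bin(nT,\lambda/n)$ has mean $E[L] = \lambda T \approx n\log(n)/(8\lambda)$, a brief computation shows $L^*/E[L] \approx 2\ln 2 > 1$, so the multiplicative Chernoff bound of Theorem~\ref{thm:chernoff} gives $\Pr[L > L^*] \le \exp(-\Omega(\lambda T))$. Summing the four error terms -- initial state ($e^{-\Omega(n)}$), $\max_t \ell_t > n/2$ ($e^{-\Omega(n)}$), $L > L^*$ ($\exp(-\Omega(\lambda T))$), and the inner Chernoff bound on the good event ($\exp(-\sqrt n/2)$) -- yields $\Pr[Z_0 \subseteq U] \le \exp(-\Omega(\min\{\lambda T, \sqrt n\}))$. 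The expected-runtime lower bound then follows from the standard $E[T^*] \ge T\cdot\Pr[T^* > T] = \Omega(T)$ iterations, or $\Omega(n\log(n)/\lambda)$ fitness evaluations after accounting for the $2\lambda$ evaluations per iteration.

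The hard part is that the $V_i$ are \emph{not} unconditionally negatively correlated: they all depend on the common random strengths $\ell_1,\ldots,\ell_T$, and this shared dependency in fact creates a positive correlation (when the $\ell_t$ happen to be atypically small, every $V_i$ is simultaneously more likely to equal $1$). The two-stage conditioning -- first on $\ell$, where NA yields concentration around $\mu(\ell)$, and then separately controlling $L = \sum_t \ell_t$ via a Binomial Chernoff bound -- is precisely what decouples this dependency, and the two competing exponents $\lambda T$ (from the deviation of $L$) and $\sqrt n$ (from the inner Chernoff applied on the event $\mu(\ell) \ge \sqrt n$) are exactly the two terms whose minimum appears in the statement.
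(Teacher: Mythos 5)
Your proposal is correct and follows essentially the same route as the paper's proof: reduce to the event that every initially-zero bit is touched by some mutation, control $\sum_t \ell_t$ via a Chernoff bound on the binomial (yielding the $\exp(-\Omega(\lambda T))$ term), lower-bound the per-bit survival probability by roughly $n^{-1/2}$ on that event, and use negative correlation of the ``never touched'' indicators to get the $\exp(-\Omega(\sqrt{n}))$ term. The one substantive addition is that you actually justify the negative-correlation step (via negative association of sampling without replacement and its closure under monotone functions of disjoint coordinate sets), a proof the paper explicitly omits as ``technical and lengthy''; this is a welcome completion rather than a different approach.
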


\begin{proof}
  Using the Chernoff bound of Theorem~\ref{thm:chernoff} \ref{multchernofflower}, we see that with probability $1 - \exp(-\Omega(n))$, the initial search point has at least $n/3$ bits valued zero (``missing bits''). Let us consider what happens in the first $T = \lfloor n \log(n) / (8\lambda^2) \rfloor$ iterations. Denote by $\ell_1, \ldots, \ell_T$ the values of $\ell$ chosen by the algorithm in these iterations. Again, with probability $1 - \exp(-\Omega(n))$, all $\ell_i$ are at most $n/2$ (Chernoff bound of Theorem~\ref{thm:chernoff} \ref{multchernoffupperstrong} and union bound, note that a binomially distributed random variable is a sum of independent $0,1$ random variables). Using these arguments a second time as well as the fact that the $\ell_i$ are independent, we obtain that with probability $1 - \exp(-\Omega(\lambda T))$, we have $\sum_{i = 1}^T \ell_i \le 2\lambda T$. Conditional on none of these exceptional event occurring, the probability that a particular one of the missing bits is never flipped in the mutation phases of the first $T$ iterations is 
  \begin{align*}
  \prod_{i = 1}^T (1 - \ell_i / n)^\lambda 
  & \ge \prod_{i = 1}^T \exp(- 2\ell_i / n)^\lambda 
  = \exp\bigg(- 2 \lambda \sum_{i = 1}^T \ell_i / n\bigg)\\
 & \ge \exp(-4\lambda^2 T / n) 
  \ge n^{-1/2},
  \end{align*}
  where we have used in the first step that $1-c \ge e^{-2c}$ for $0<c<1/2$.
  
  The events that a bit was never flipped in a certain time interval are not independent, since also in a single application of the mutation operator the bits are not treated independently. However, since we always flip a fixed number of bits (not necessarily the same in each iteration), these events are negatively correlated. We omit this proof, because it is technical and lengthy, but neither difficult and nor insightful. From this, we conclude that the probability that there is a missing bit that was never flipped up to iteration $T$ is at least $1 - (1 - n^{-1/2})^{n/3} \ge 1 - \exp(-n^{-1/2})^{n/3} = 1 - \exp(-n^{1/2}/3)$, this time using the estimate $1+x \le e^x$ valid for all $x \in \R$. 
  
  Consequently, with probability at least $1 - \exp(-\Omega(n)) - \exp(-\Omega(\lambda T)) - \exp(-n^{1/2}/3)$, the \ga needs more than $T$ iterations to find the optimum. This immediately implies the claimed bound on the expected optimization time. 
\end{proof}

\subsection{Second Lower Bound}

We prove the second lower bound via drift analysis. We show that the expected fitness increase in each round is $O(\log(\lambda)/\log\log(\lambda))$. Then the additive drift theorem yields the lower bound on the expected optimization time. While our arguments are valid also for constant $\lambda$, in particular the asymptotic notation becomes easier when assuming $\lambda = \omega(1)$. We can make this assumption freely, since for constant $\lambda$ the first lower bound is stronger anyway. 
  
\begin{lemma}\label{lem:lower2}
  Let $\lambda = \omega(1)$. Then the expected optimization time of the \ga with parameters $p=\lambda/n$ and $c=1/\lambda$ is at least $\Omega(n \lambda \log\log(\lambda) / \log(\lambda))$.
\end{lemma}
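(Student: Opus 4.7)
The plan is to apply the additive drift theorem (Theorem~\ref{thm:drift}(ii)) to the fitness-distance process $d(x_t) := n - \OM(x_t)$. Since the uniformly random initialization satisfies $E[d(x_0)] = n/2$ and each iteration performs $2\lambda$ fitness evaluations, it suffices to show that the one-step expected drift satisfies $E[d(x_t) - d(x_{t+1}) \mid x_t] = O(\log(\lambda)/\log\log(\lambda))$ uniformly in $x_t$. Additive drift then gives $\Omega(n \log\log(\lambda)/\log(\lambda))$ expected iterations, and hence the claimed $\Omega(n \lambda \log\log(\lambda)/\log(\lambda))$ fitness evaluations.

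To bound the drift, fix an iteration from state $x$, and let $\ell$, $x'$ be as in the mutation phase of Algorithm~\ref{alg:GA}, with $B' := \{i : x_i = 0, x'_i = 1\}$ the good bits of the mutation winner. Since $x'$ differs from $x$ in exactly $\ell$ positions, $|B'| \le \ell$. For each crossover offspring $y^{(j)}$, the positions at which $y^{(j)}$ improves on $x$ are precisely the indices of $B'$ where crossover selected $x'$ over $x$; hence $\OM(y^{(j)}) - \OM(x) \le G_j := |\{i \in B' : y^{(j)}_i = 1\}|$ and $d(x_t) - d(x_{t+1}) \le \max_{j \in [\lambda]} G_j$. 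The crucial independence observation is that, conditional on $B'$, the $\lambda$ crossover operations are independent, so $G_1, \ldots, G_\lambda$ are independent, each of distribution $\Bin(|B'|, 1/\lambda)$.

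I would then control $E[\max_j G_j]$ by splitting on the typical event $\{\ell \le 2\lambda\}$. On this event $|B'| \le 2\lambda$, so Stirling's inequality $k! \ge (k/e)^k$ yields $\Pr[G_j \ge k] \le \binom{|B'|}{k} \lambda^{-k} \le (2e/k)^k$. Choosing $k_0 := \lceil C \log(\lambda)/\log\log(\lambda) \rceil$ with $C$ a sufficiently large absolute constant makes $(2e/k_0)^{k_0} \le \lambda^{-2}$ (the key calculation is $k_0 \log k_0 \gtrsim 2 \log \lambda$), a union bound gives $\Pr[\max_j G_j \ge k_0] \le \lambda^{-1}$, and the tail decays super-geometrically beyond $k_0$. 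The standard identity $E[X] \le k_0 + \sum_{k > k_0} \Pr[X \ge k]$ for nonnegative integer $X$ then yields $E[\max_j G_j \cdot \mathbf{1}_{\ell \le 2\lambda}] = O(\log(\lambda)/\log\log(\lambda))$. On the complementary event $\{\ell > 2\lambda\}$, which has probability at most $e^{-\lambda/3}$ by Chernoff (Theorem~\ref{thm:chernoff}\ref{multchernoffupper} with $\delta = 1$), I would use $\max_j G_j \le \ell$ together with Cauchy--Schwarz and $E[\ell^2] = O(\lambda^2)$ to obtain $E[\ell \cdot \mathbf{1}_{\ell > 2\lambda}] \le (E[\ell^2] \Pr[\ell > 2\lambda])^{1/2} = O(\lambda e^{-\lambda/6})$, which is $o(1)$ for $\lambda = \omega(1)$ and hence negligible compared to $\log(\lambda)/\log\log(\lambda)$.

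The main obstacle I anticipate is the correct calibration of $k_0$ so that the Stirling estimate indeed beats the $\lambda$ from the union bound: this is precisely the Poisson-maximum scaling $k_0 \log k_0 \gtrsim 2 \log \lambda$, which mirrors the balls-into-bins heuristic from around Lemma~\ref{lem:advcross}, but run as an upper bound on the most-loaded bin rather than a lower bound. A subsidiary concern is that the drift bound must hold uniformly in $x_t$; however $E[\max_j G_j \mid x_t]$ depends on $x_t$ only through the distribution of $B'$, and the pointwise bound $|B'| \le \ell$ together with the universal tail estimate on $\ell$ make the final bound independent of $x_t$, so uniformity is automatic.
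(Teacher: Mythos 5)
Your proposal is correct and follows essentially the same route as the paper's proof: bound the one-iteration gain by the maximum over the $\lambda$ crossover offspring of the number of good bits inherited from $x'$ (a $\Bin(|B'|,1/\lambda)$ variable with $|B'|\le \ell$), show this maximum is $O(\log(\lambda)/\log\log(\lambda))$ in expectation via a tail bound plus union bound, treat the atypical event $\ell>2\lambda$ separately, and conclude with the additive drift theorem. The only differences are cosmetic --- you use the elementary estimate $\binom{m}{k}\lambda^{-k}\le(2e/k)^k$ and Cauchy--Schwarz where the paper uses the strong multiplicative Chernoff bound and a conditional-expectation argument --- and both yield the same calibration $k_0\log k_0 \gtrsim \log\lambda$.
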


\begin{proof}
  Let $x$ be any search point different from the optimum. We first analyze what happens in a typical iteration of the main loop of the \ga and later treat the exceptional cases. Let $\ell$, $x'$, and $y^{(1)}, \dots, y^{(\lambda)}$ be as in the pseudo-code of Algorithm~\ref{alg:GA}. 
  
  With probability $1 - \exp(-\Theta(\lambda))$, we have $\ell \le 2 \lambda$ (Chernoff bound of Theorem~\ref{thm:chernoff} \ref{multchernoffupper}). Hence, trivially, $B := \{i \in [n] \mid x_i = 0 \wedge x'_i=1\}$ has cardinality at most $2 \lambda$. Let $Y_j := \{i \in B \mid y^{(j)}_i = 1\}$ the set of new $1$-bits that made it into $y^{(j)}$. We aim at showing that $|Y_j|$ is not very large.  We have $E[|Y_j|] = |B|/\lambda \le 2$. There is nothing to show if $E[|Y_j|]=0$, hence let us assume that $E[|Y_j|] > 0$. Let $r := (2e\ln(\lambda)/\ln\ln(\lambda)) / E[|Y_j|]$ and note that $r \ge e\ln(\lambda)/\ln\ln(\lambda) =: r'$. Then the strong version of the Chernoff bound (Theorem~\ref{thm:chernoff}~\ref{multchernoffupperstrong}) yields    \begin{align*}
  & \Pr\left[|Y_j| \ge 2e \ln(\lambda)/\ln\ln(\lambda) \right] 
  = \Pr\left[|Y_j| \ge  r E[|Y_j|] \right]\\
 & \quad \le (\tfrac{e^{r-1}}{r^r})^{E[|Y_j|]}
 \le (e/r)^{r E[|Y_j|]} = (e/r)^{2r'} \le (e/r')^{2r'}\\
 & \quad = \exp(\ln((e/r')^{2r'})) 
 = \exp(-2e\ln(\lambda)(1-o(1))\\
 & \quad =\lambda^{-2e+o(1)}.
  \end{align*}
  Hence, with probability at least $1 - \lambda^{-2e+1+o(1)}$, none of the $|Y_j|$ is larger than $2r'$, which implies that $\OM(y) \le \OM(x) + 2r'$.
  
  It remains to treat the exceptional cases. If $\ell> 2 \lambda$, which happens with probability at most $\exp(-\Theta(\lambda))$, the expected value of $\ell$ is still only $E[\ell | \ell> 2 \lambda] < 3 \lambda + 1$, and thus $E[\OM(y)] \le \OM(x) + 3\lambda + 1$. The second exceptional case occurs when $\ell \le 2 \lambda$, but (with a probability of at most $\lambda^{-2e+1+o(1)}$) we have $\OM(y) > \OM(x) + 2r'$. In this case, however, we still have $\OM(y) \le \OM(x) + \ell \le \OM(x) + 2\lambda$. 
  
From all this, we see that 
  \begin{align*}
&  E[\OM(y) - \OM(x)]\\
&   \le 
  \left(1 - \exp(-\Theta(\lambda))\right)\\
 & \quad \left((1 - \lambda^{-2	e+1+o(1)})(2e\log(\lambda)/\log\log(\lambda))
  +\lambda^{-2e+1+o(1)}2\lambda \right)\\
  & \quad
   + \exp(-\Theta(\lambda)) (3\lambda+1) \\
 & \le O(\log(\lambda)/\log\log(\lambda)).
  \end{align*}
   We now apply the classic additive drift theorem (Theorem~\ref{thm:drift}) and obtain an expected number of rounds of $\Omega(n \log\log(\lambda) / \log(\lambda))$, equivalent to an optimization time of $\Omega(n \lambda \log\log(\lambda) / \log(\lambda))$.
\end{proof}

\section{Conclusion}

We conducted a tight runtime analysis for the \ga on \onemax giving an improved upper bound for the expected optimization time, the first lower bound (which matches the upper bound for all values of $\lambda$), and a tail bound for the upper tail of the runtime distribution. This analysis both shows that the \ga is faster than what could be shown in~\cite{DoerrDE15}, and it gives the asymptotically optimal value for the off-spring population size $\lambda$, again leading to a super-constant factor speed-up over the runtime stemming from the value giving the best bound in the previous work.

Our sharp bounds also give more insight in the working principles of this algorithm. In particular, we observe that in its crossover phase generating $\lambda$ offspring in parallel often produces at least one offspring that is significantly better that the expected outcome of a crossover application. This allows to gain several (including regularly a super-constant number in the early time of the optimization process) fitness levels in one iteration. This advantage of larger offspring population sizes seems to have been rarely analyzed rigorously (with the analyses of the $(1+\lambda)$ EA in~\cite{JansenJW05,DoerrK15} being the only exceptions known to us). The more common use of larger offspring population sizes in the literature seems to be that an offspring population size of $\lambda$ reduces the waiting time for a fitness level gain by approximately a factor of $\lambda$ (given that this waiting time is large enough). This latter argument, naturally, does not reduce the (total) optimization time (number of fitness evaluations), but only the parallel one (number of generations).

With the proof methods developed in this work, which include a number of clever combinations of drift and Chernoff bounds, we are optimistic that it is now possible to analyze the \ga also on more complicated optimization problems. The first experimental results for several standard test functions~\cite{DoerrDE15} and combinatorial optimization problems~\cite{GoldmanP14} suggest that this is a fruitful direction of research. 

\subsection*{Acknowledgments}
We thank an unknown reviewer for very detailed comments and pointing us to the work Witt~\cite{Witt14}.
}

\newcommand{\etalchar}[1]{$^{#1}$}

\end{document}